\newcommand {\Par}[1]{ \left( #1 \right)}
\begin{document}

\author{
  Ohad Shamir\\
  Weizmann Institute of Science \\
  \texttt{ohad.shamir@weizmann.ac.il}
 \and
  Liran Szlak\\
  Weizmann Institute of Science \\
  \texttt{liran.szlak@weizmann.ac.il}
}

\date{}

\title{Online Learning with Local Permutations and Delayed Feedback} 
\maketitle

\vskip 0.3in

\newtheorem{theorem}{Theorem}
\newtheorem{lemma}{Lemma}
\newtheorem{corollary}{Corollary}

\begin{abstract}
We propose an Online Learning with Local Permutations (OLLP) setting, in which 
the learner is allowed to slightly permute the \emph{order} of the loss 
functions generated by an adversary. On one hand, this models natural 
situations where the exact order of the learner's responses is not crucial, and 
on the other hand, might allow better learning and regret performance, by 
mitigating highly adversarial loss sequences. Also, with random permutations, 
this can be seen as a setting interpolating between adversarial and stochastic 
losses. In this paper, we consider the 
applicability of this setting to convex online learning with delayed feedback, 
in which the feedback on the prediction made in round $t$ arrives with some 
delay $\tau$. With such delayed feedback, the best possible 
regret bound is well-known to be $O(\sqrt{\tau T})$. We prove that by 
being able to permute losses by a distance of at most $M$ (for $M\geq \tau$), 
the regret can be improved to $O(\sqrt{T}(1+\sqrt{\tau^2/M}))$, using a 
Mirror-Descent based algorithm which can be applied for both Euclidean and 
non-Euclidean geometries. We also prove a lower bound, showing 
that for $M<\tau/3$, it is impossible to improve the standard 
$O(\sqrt{\tau T})$ regret bound by more than constant factors. Finally, we 
provide some experiments validating the performance of our algorithm. 
\end{abstract}

\section{Introduction}
\label{sec:intro}

Online learning is traditionally posed as a repeated game where the learner 
has to provide predictions on an arbitrary sequence of loss functions, possibly 
even generated adversarially. Although it is often possible to devise 
algorithms with non-trivial regret guarantees, these have to cope with 
arbitrary loss sequences, which makes them conservative and in some cases 
inferior to algorithms not tailored to cope with worst-case behavior. Indeed, 
an emerging line of work considers how better online learning can be obtained 
on ``easy'' data, which satisfies some additional assumptions. Some examples 
include losses which are sampled i.i.d. from some distribution, change 
slowly in time, have a consistently best-performing predictor across time,  
have some predictable structure, mix adversarial and stochastic losses, etc. 
(e.g.
\cite{sani2014exploiting,NIPS2016_6341,bubeck2012best,seldin2014one,
hazan2010extracting,chiang2012online,steinhardt2014adaptivity,hazan2011better,
rakhlin2013online,seldin2014one}).

In this paper, we take a related but different direction: Rather than 
explicitly excluding highly adversarial loss sequences, we consider how 
slightly perturbing them can mitigate their worst-case behavior, and lead to 
improved performance. Conceptually, this resembles smoothed 
analysis \cite{spielman2004smoothed}, in which one 
considers the worst-case performance of some algorithm, after performing some 
perturbation to their input. The idea is that if the worst-case instances are 
isolated and brittle, then a perturbation will lead to easier instances, and 
better reflect the attainable performance in practice. 

Specifically, we propose a setting, in which the learner is allowed to slightly 
\emph{reorder} the sequence of losses generated by an adversary: Assuming the 
adversary chooses losses $h_1,\ldots,h_T$, and before any losses are revealed, 
the learner may choose a permutation 
$\sigma$ on $\{1,\ldots,T\}$, satisfying $\max_t|t-\sigma(t)|\leq M$ for some 
parameter $M$, and then play a standard 
online learning game on losses $h_{\sigma(1)},\ldots,h_{\sigma(T)}$. We denote 
this as the \emph{Online Learning with Local Permutations} (OLLP) setting. 
Here, $M$ controls the amount of power given to the learner: $M=0$ means that 
no reordering is performed, and the setting is equivalent to standard 
adversarial online learning. At the other extreme, $M=T$ means that the learner 
can reorder the 
losses arbitrarily. For example, the learner may choose to order the losses 
uniformly at random, making it a quasi-stochastic setting (the 
only difference compared to i.i.d. losses is that they are sampled 
without-replacement rather than with-replacement).  

We argue that allowing the learner some flexibility in the order of responses 
is a natural assumption. For example, when the learner needs to provide 
rapid predictions on a high-frequency stream of examples, it is often 
immaterial if the predictions are not provided in the exact same order at which 
the examples arrived. Indeed, by buffering examples for a few rounds before 
being answered, one can simulate the local permutations discussed earlier. 

We believe that this setting can be useful in various online learning problems, 
where it is natural to change a bit the order of the loss functions. In this 
paper, we focus on one well-known problem, namely online learning with delayed 
feedback. In this case, rather than being provided with the loss function 
immediately after prediction is made, the learner only receives the loss 
function after a certain number $\tau\geq 1$ of rounds. This naturally models 
situations where the feedback comes much more slowly than the required 
frequency of predictions: To give a concrete example, consider a web 
advertisement problem, where an algorithm picks an ad to display, and then 
receives a feedback from the user in the form of a click. It is likely 
that the algorithm will be required to choose ads for new users while still 
waiting for the feedback from the previous user. 

For convex online learning with delayed feedback, in a standard adversarial 
setting, it is known that the attainable regret is on the order of 
$O(\sqrt{\tau T})$, and this is also the best possible in the worst case
\cite{weinberger2002delayed,mesterharm2005line,langford2009slow,joulani2013online,quanrud2015online}.
On the other hand, in a stochastic setting where the losses are sampled i.i.d. 
from some distribution, \cite{agarwal2011distributed} show that the attainable 
regret is much better, on the order of $O(\sqrt{T}+\tau)$. This gap between the 
worst-case adversarial setting, and the milder i.i.d. 
setting, hints that this problem is a good fit for our OLLP framework.

Thus, in this paper, we focus on online learning with feedback delayed up to 
$\tau$ rounds, in the OLLP framework where the learner is allowed to locally 
permute the loss functions (up to a distance of $M$). First, we devise an 
algorithm, denoted as Delayed Permuted Mirror Descent, and prove that it 
achieves an expected regret bound of order $O (\sqrt{T(\tau^2/M+1)})$ assuming 
$M\geq \tau$. As $M$ increases 
compared to $\tau$, this regret bound interpolates between the standard 
adversarial $\sqrt{\tau T}$ regret, and a milder $\sqrt{T}$ regret, typical of 
i.i.d. losses. As its name implies, the algorithm is based on the well-known 
online mirror descent (OMD) algorithm (see 
\cite{hazan2016introduction,shalev2012online}), and works in the same 
generality, involving both Euclidean and non-Euclidean geometries. The 
algorithm is based on dividing the entire sequence of functions into blocks of 
size $M$ and performing a random permutation within each block. Then, two 
copies of OMD are ran on different parts of each block, with appropriate 
parameter settings. A careful analysis, mixing adversarial and stochastic 
elements, leads to the regret bound. 

In addition, we provide a lower bound complementing our upper bound analysis, 
showing that when $M$ is significantly smaller than $\tau$ (specifically, 
$\tau/3$), then even with local permutations, it is impossible to obtain a  
worse-case regret better than $\Omega(\sqrt{\tau T})$, matching (up to 
constants) the attainable regret in the standard adversarial setting where no 
permutations are allowed. Finally, we provide some experiments validating the 
performance of our algorithm. 

The rest of the paper is organized as follows: in 
section~\ref{sec:permutationSetting} we formally define the Online Learning 
with Local Permutation setting, section~\ref{sec:alg} describes the Delayed 
Permuted Mirror Descent algorithm and outlines its regret analysis, section 
~\ref{sec:lowerBound} discusses a lower bound for the delayed setting with 
limited permutation power, section~\ref{sec:exp} shows experiments, and finally 
section~\ref{sec:disc} provides concluding remarks, discussion, and open 
questions. Appendix \ref{app:proofs} contains most of the proofs.

\section{Setting and Notation}
\label{sec:permutationSetting}

\textbf{Convex Online Learning.} Convex online learning is posed as a repeated 
game between a learner and an adversary (assumed to be oblivious in this 
paper). First, the adversary chooses $T$ convex losses $h_1,\ldots,h_T$ which 
are functions from a convex set $\mathcal{W}$ to $\mathbb{R}$. At each 
iteration $t \in \{1,2,\ldots,T \}$, the learner makes a prediction $w_t$, and 
suffers a loss of $h_t \Par{w_t}$. To simplify the presentation, we use the 
same notation $\nabla h_t(w)$ to denote either a gradient of $h_t$ at $w$ (if 
the loss is differentiable) or a subgradient at $w$ otherwise, and refer to it 
in both cases as a gradient. We assume that both $w \in \mathcal{W}$ and the 
gradients of any function $h_t$ in any point $w\in 
\mathcal{W}$ are bounded w.r.t. some norm: Given a norm $\|\cdot\|$ with a dual 
norm $\|\cdot\|_{\ast}$, we assume that the diameter of the space $\mathcal{W}$ is bounded by $B^2$
and that $\forall w\in \mathcal{W}, \forall h \in \lbrace h_1,h_2,...,h_T 
\rbrace: \| \nabla h \left( w \right)\|_\ast \leq G$. The purpose of the 
learner is to minimize her (expected) regret,  i.e. 
\begin{gather*}
R(T) = \mathbb{E} \left[ \sum_{t=1}^T h_t \left( w_t \right) - \sum_{t=1}^T h_t \left( w^{\ast} \right) \right] 
~~~~\text{where}~~~~ w^{\ast} = \underset{w \in \mathcal{W}}{\text{argmin }}  \sum_{t=1}^T h_t \left( w \right)
\end{gather*}
where the expectation is with respect to the possible randomness of the algorithm. 

\textbf{Learning with Local Permutations.} In this paper, we introduce and 
study a variant of this standard setting, which gives the learner a bit more 
power, by allowing her to slightly modify the \emph{order} in which the losses 
are processed, thus potentially avoiding highly adversarial but brittle loss 
constructions. We denote this setting as the Online Learning with Local 
Permutations (OLLP) setting. Formally, letting $M$ be a permutation window 
parameter, the learner is allowed (at the beginning of the game, and before any 
losses are revealed) to permute $h_1,\ldots,h_T$ to 
$h_{\sigma^{-1}(1)},\ldots,h_{\sigma^{-1}(T)}$, where $\sigma$ is a permutation 
from the set $Perm := \left\lbrace \sigma : \forall t, | \sigma \Par{t} - t | 
\leq M \right\rbrace $. After this permutation is performed, the learner is 
presented with the permuted sequence as in the standard online learning 
setting, with the same regret as before.  To simplify notation, we let 
$f_t=h_{\sigma^{-1}(t)}$, so the learner is presented with the loss sequence 
$f_1,\ldots,f_T$, and the regret is the same as the standard regret, i.e. 
$R(T)=\mathbb{E} \left[ \sum_{t=1}^{T}f_t(w_t)-\sum_{t=1}^{T} f_t(w^*) \right]$.
Note that if $M=0$ then we are in the fully adversarial setting (no permutation is allowed). At the other extreme, if $M=T$ and $\sigma$ is chosen uniformly at random, then we are in a stochastic setting, with a uniform distribution over the set of functions chosen by the adversary (note that this is close but differs a bit from a setting of i.i.d. losses). In between, as $M$ varies, we get an interpolation between these two settings.

\textbf{Learning with Delayed Feedback.}
The OLLP setting can be useful in many applications, and can potentially lead to improved regret bounds for various tasks, compared to the standard adversarial online learning.
In this paper, we focus on studying its applicability to the task of learning from delayed feedback.

Whereas in standard online learning, the learner gets to observe the loss $f_t$ immediately at the end of iteration $t$, here we assume that at round $t$, she only gets to observe $f_{t-\tau}$ for some delay parameter $\tau < T$ (and if $t < \tau$, no feedback is received). For simplicity, we focus on the case where $\tau$ is fixed, independent of $t$, although our results can be easily generalized (as discussed in subsection~\ref{sec:extensionTau}). We emphasize that this is distinct from another delayed feedback scenario sometimes studied in the literature (\cite{agarwal2011distributed,langford2009slow}), where rather than receiving $f_{t-\tau}$ the learner only receives a (sub)gradient  of $f_{t-\tau}$ at $w_{t-\tau}$. This is a more difficult setting, which is relevant for instance when the delay is due to the time it takes to compute the gradient.

\section{Algorithm and Analysis}
\label{sec:alg}
Our algorithmic approach builds on the well-established online mirror descent 
framework. Thus, we begin with a short reminder of the Online Mirror Descent 
algorithm (see e.g. \cite{hazan2016introduction} for more details). Readers who are familiar with the algorithm are 
invited to skip to Subsection \ref{subsec:md}.

The online mirror descent algorithm is a generalization of online gradient 
descent, which can handle non-Euclidean geometries. The general idea is the 
following: we start with some point $w_t \in \mathcal{W}$, where $\mathcal{W}$ 
is our primal space. We then map this point to the dual space using a (striclty 
convex and continuously differentiable) mirror 
map $\psi$, i.e. $\nabla \psi \left( w_t \right) \in \mathcal{W}^{\ast}$, then 
perform the gradient update in the dual space, and finally map the resulting 
new point back to our primal space $\mathcal{W}$ again, i.e. we want to find a 
point $w_{t+1} \in \mathcal{W}$ s.t. $\nabla \psi \left( w_{t+1} \right) = 
\nabla \psi \left( w_t \right) - \eta \cdot g_t$ where $g_t$ denotes the 
gradient. Denoting by $w_{t+\frac{1}{2}}$ the point satisfying $\nabla \psi 
( w_{t+\frac{1}{2}} ) = \nabla \psi \left( w_t \right) - \eta \cdot 
g_t$, it can be shown that $w_{t+\frac{1}{2}} = \left( \nabla \psi^\ast 
\right)\left( \nabla \psi \left( w_t \right) - \eta \cdot g_t \right)$, where 
$\psi^\ast$ is the dual function of $\psi$. This point, $w_{t+\frac{1}{2}}$, 
might lie outside our hypothesis class $\mathcal{W}$, and thus we might need to 
project it back to our space $\mathcal{W}$. We use the Bregman divergence 
associated to $\psi$ to do this:
\[
w_{t+1} = \underset{w \in 
\mathcal{W}}{argmin} \triangle_{\psi} ( w, w_{t+\frac{1}{2}} ),
\] 
where the Bregman divergence $\Delta_{\psi}$ is defined as 
\[
\triangle_\psi 
\left( x,y \right) = \psi \left( x \right) - \psi \left( y \right) - \langle 
\nabla \psi ( y ), x-y \rangle.
\]

Specific choices of the mirror map $\psi$ leads to specific instantiations of the algorithms for various geometries. Perhaps the simplest example is $\psi 
\left( x \right) = \frac{1}{2} \| x \|_2^2$, with associated Bregman 
divergence  $\triangle_\psi \left( x,y \right) = \frac{1}{2} \cdot \| x - y 
\|^2$. This leads us to the standard and well-known online gradient descent 
algorithm, where $w_{t+1}$ is the Euclidean projection on the set $\mathcal{W}$ 
of 
$w_t-\eta\cdot g_t$.

Another example is the negative entropy mirror map $\psi \left( x 
\right) = \sum_{i=1}^n x_i \cdot \log \left( x_i \right)$, which is 
$1$-strongly convex with respect to the $1$-norm on the simplex $\mathcal{W} = 
\left\lbrace x \in \mathbb{R}^n_+ : \sum_{i=1}^n x_i = 1 \right\rbrace$. In 
that case, the resulting algorithm is the well-known multiplicative updates 
algorithm, where 
$w_{t+1,i} = w_{t,i}\cdot \exp(-\eta g_{t,i})/\sum_{j=1}^{n} 
w_{t,i}\cdot \exp(-\eta g_{t,j})$.
Instead of the $1$-norm on the simplex, one 
can also consider arbitrary $p$-norms, and take $\psi(x)=\frac{1}{2}\cdot 
\|x\|_q^2$, where $q$ is the dual norm (satisfying $1/p+1/q=1$).

\subsection{The Delayed Permuted Mirror Descent Algorithm}
\label{subsec:md}

Before describing the algorithm, we note that we will focus here on 
the case where the permutation window parameter $M$ is larger than the delay 
parameter $\tau$. If $M<\tau$, then our regret bound is generally no better 
than the $O(\sqrt{\tau T})$ obtainable by a standard algorithm without any 
permutations, and for $M<\tau/3$, this is actually tight as shown in Section 
\ref{sec:lowerBound}.

We now turn to present our algorithm, denoted as The Delayed Permuted 
Mirror Descent algorithm (see algorithm~\ref{alg:DPMD} below as well as 
figure~\ref{Fig:GradientUpdates} for a graphical illustration). First, the 
algorithm splits the time horizon $T$ into $M$ consecutive blocks, and performs 
a uniformly random permutation on the loss functions within each block. 
Then, it runs two online mirror descent algorithms in parallel, and uses 
the delayed gradients in order to update two separate predictors -- $w^f$ and 
$w^s$, where $w^f$ is used for prediction in the first $\tau$ rounds of each 
block, and $w^s$ is used for prediction in the remaining $M-\tau$ rounds (here, 
$f$ stands for ``first'' and $s$ stands for ``second''). The 
algorithm maintaining $w^s$ crucially relies on the fact that the gradient of 
any two functions in a block (at some point $w$) is equal, in expectation over 
the random permutation within each block. This allows us to avoid most of the 
cost incurred by delays within each block, since the expected gradient 
of a delayed function and the current function are equal. A complicating factor 
is that at the first $\tau$ rounds of each block, no losses from the current 
block has been revealed so far. To tackle this, we use another algorithm 
(maintaining $w^f$), specifically to deal with the losses at the beginning of 
each block. This algorithm does not benefit from the random permutation, and 
its regret scales the same as standard adversarial online learning with delayed 
feedback. However, as the block size $M$ increases, the \emph{proportion} of 
losses handled by $w^f$ decreases, and hence its influence on the 
overall regret diminishes.

The above refers to how the blocks are divided for purposes of 
\emph{prediction}. For purposes of \emph{updating} the predictor of each 
algorithm, we need to use the blocks a bit differently. Specifically, we let 
$T_1$ and $T_2$ be two sets of indices. $T_1$ includes all indices from the 
first $\tau$ time points of every block, and is used to update $w^f$. 
$T_2$ 
includes the first $M-\tau$ indices of every block, and is used to 
update $w^s$ (see figure~\ref{Fig:GradientUpdates}). Perhaps surprisingly, note 
that $T_1$ and $T_2$ are not disjoint, and their union does not cover all of 
$\{1,\ldots,T\}$. The reason is that due to the random permutation in each 
block, the second algorithm only needs to update on some of the loss functions 
in each block, in order to obtain an expected regret bound on all the losses it 
predicts on. 

\begin{figure*}[t]
\centering
\includegraphics[scale=0.7]{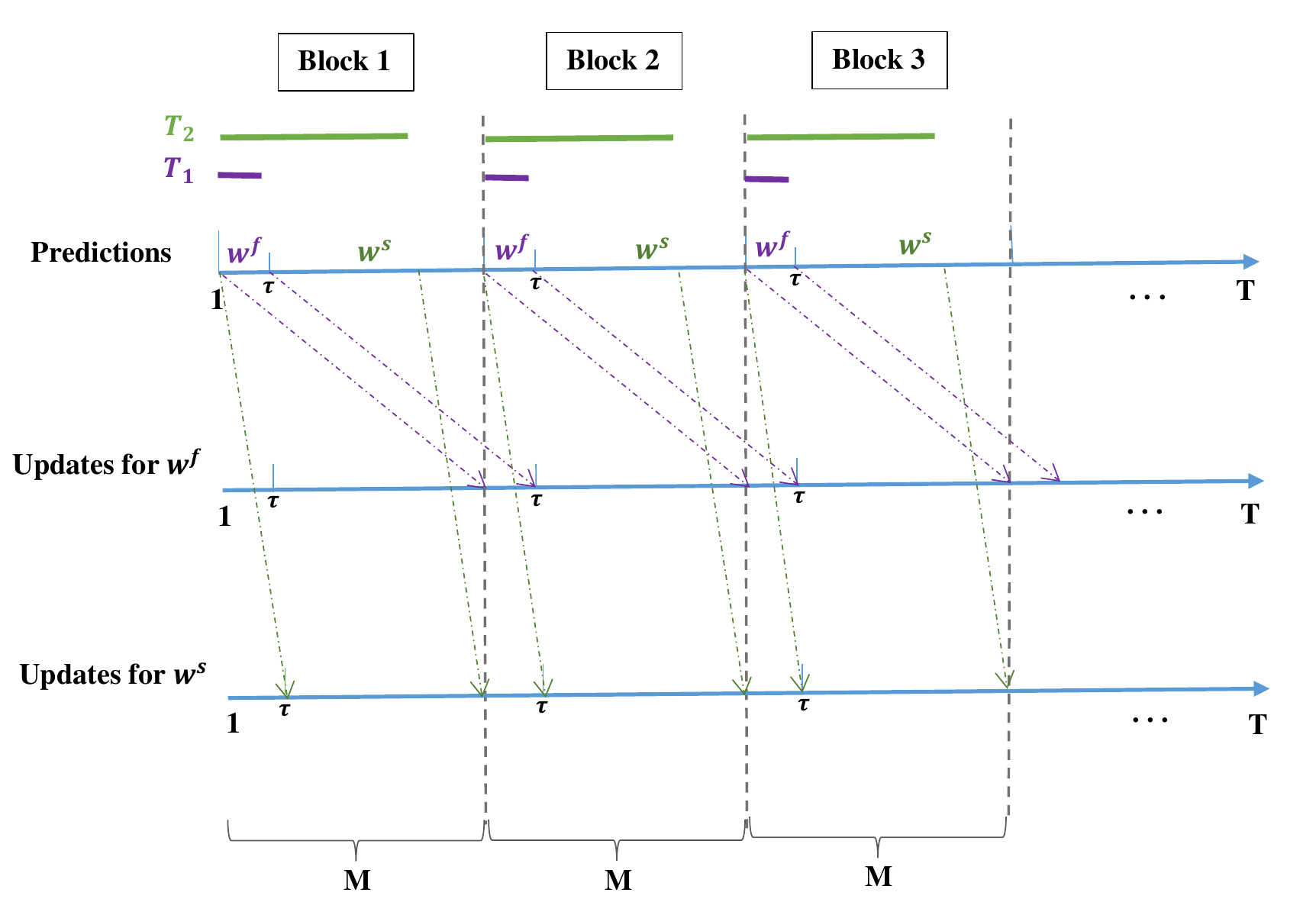}
\caption{Scheme of predictions and updates of both parallel algorithms (best 
viewed in color; see text for details). Top color bars mark which iterations 
are in $T_1$ (purple lines) and which are in $T_2$ (green lines). Top timeline 
shows which predictor, $w^f$ or $w^s$, is used to predict in each iteration. 
Middle timeline shows where gradients for updating $w^f$ come from (first 
$\tau$ iterations of the previous block), and lower timeline shows where 
gradients for updating $w^s$ come from (first $M-\tau$ iterations of the same 
block, each gradient from exactly $\tau$ rounds back).}
\label{Fig:GradientUpdates}
\end{figure*}

\begin{algorithm}[h]
\caption{Delayed Permuted Mirror Descent}
\label{alg:DPMD}
\begin{algorithmic}

\STATE {\bfseries Input:}{ $M$, $\eta_f$, $\eta_s$}
\STATE Init: $w_1^f = 0$, $w_1^s = 0$, $j_f = j_s = 1$ \\
\STATE Divide $T$ to consecutive blocks of size $M$, and permute the losses uniformly at random within each block. Let $f_1,\ldots,f_T$ denote the resulting permuted losses. \\
\FOR{$t = 1...,T$}{
	\IF{ $t \in $ first $\tau$ rounds of the block} {
	\STATE Predict using $w_{j_f}^f$ \\
	\STATE Receive a loss function from $\tau$ places back: $f_{t - M} = f_{T_1 \left( j_f - \tau \right)}$. If none exists (in the first $\tau$ iterations), take the $0$ function. \\
	\STATE Compute: $\nabla f_{T_1\left( j_f -\tau \right)} \left(w_{j_f - \tau}^f \right)$ \\
	\STATE Update: $w_{j_f+\frac{1}{2}}^f = \left( \nabla \psi^\ast \right)\left( \nabla \psi \left( w_ {j_f}^f \right) - \eta_f \nabla f_{T_1\left( j_f -\tau \right)} \left(w_{j_f - \tau}^f \right) \right)$ \\
\STATE Project: $w_{j_f+1} = \underset{w \in \mathcal{W}}{argmin} \triangle_{\psi} \left( w, w_{j_f+\frac{1}{2}}^f \right)$ \\
 	$j_f = j_f + 1$ \\
	}
	\ELSE {
	\STATE Predict using $w_{j_s}^s$ \\
	\STATE Receive a loss function from $\tau$ places back: $f_{t - \tau} = f_{T_2 \left( j_s\right)}$ \\ 	
	\STATE Compute: $\nabla f_{t-\tau} \left(w_{j_s}^s \right) = \nabla f_{T_2\left( j_s \right) } \left(w_{j_s}^s \right)$ \\
	\STATE Update: $w_{j_s+\frac{1}{2}}^s =  \left( \nabla \psi^\ast \right)\left( \nabla \psi \left( w_ {j_s}^s \right) - \eta_s \cdot \nabla f_{T_2\left( j_s \right) } \left(w_{j_s}^s \right) \right)$ \\	
\STATE Project: $w_{j_s+1} = \underset{w \in \mathcal{W}}{argmin} \triangle_{\psi} \left( w, w_{j_s+\frac{1}{2}}^s \right)$ \\
 	$j_s = j_s + 1$ \\	
	}
	\ENDIF
}
\ENDFOR
\end{algorithmic}
\end{algorithm}

\subsection{Analysis}
\label{subsec:analysis}
The regret analysis of the Delayed Permuted Mirror Descent algorithm is based 
on a separate analysis of each of the two mirror descent sub-algorithms, where 
in the first sub-algorithm the delay parameter $\tau$ enters multiplicatively, 
but doesn't play a significant role in the regret of the second sub-algorithm 
(which utilizes the stochastic nature of the permutations). Combining the 
regret bound of the two sub-algorithms, and using the fact that the portion of 
losses predicted by the second algorithm increases with $M$, leads to an 
overall regret bound improving in $M$.

In the proof, to analyze the effect of delay, we 
need a bound on the distance 
between any two consequent predictors $w_t,w_{t+1}$ generated by the 
sub-algorithm. This depends on the mirror map and Bregman divergence used for 
the update, and we currently do not have a bound holding in full generality. 
Instead, we let $\Psi_{(\eta_f,G)}$ be some upper bound on $\| 
w_{t+1}-w_t \|$, where the update is using step-size $\eta_f$ and gradients of 
norm $\leq G$. Using $\Psi_{\Par{\eta_f,G}}$ we prove a general bound for all 
mirror maps. In Lemmas \ref{lemma:euclidean} and \ref{lemma:negEntropy} in 
Appendix~\ref{app:fullProofAlg}, we show that for two common mirror maps 
(corresponding to online gradient descent and multiplicative weights), 
$\Psi_{(\eta_f,G)}\leq 
c\cdot \eta_f G$ for some numerical constant $c$, leading to a regret bound of 
$O(\sqrt{T(\tau^2/M+1)})$. 
Also, we prove theorem~\ref{thm:main} for $1$-strongly convex mirror maps, 
although it can be generalized to any $\lambda$-strongly convex mirror map 
by scaling.

\begin{theorem}\label{thm:main}
Given a norm $\|\cdot\|$, suppose that we run the Delayed Permuted Mirror 
Descent algorithm using a mirror map $\psi$ which is $1$-strongly convex w.r.t. 
$\|\cdot\|$, over a domain $\mathcal{W}$ with diameter $B^2$ w.r.t the bregman divergence of $\psi$: $\forall w,v\in \mathcal{W}: \triangle_\psi(w,v) \leq B^2$, and such 
that the (sub)-gradient $g$ of each loss function on any $w\in\mathcal{W}$ 
satisfies 
$\|g\|_*\leq G$ (where $\|\cdot\|_*$ is the dual norm of $\|\cdot\|$). Then the 
expected regret, given a delay parameter $\tau$ and step sizes $\eta_f,\eta_s$ 
satisfies:
\begin{align*}
\mathbb{E} \left[ \sum_{t=1}^{T} f_t \left( w_t \right) -  f_t \left( w^{\ast} \right) \right] & \leq \frac{B^2}{\eta_f} + \eta_f \cdot \frac{T \tau}{M} \cdot \frac{G^2}{2} 
 + \frac{T \tau^2}{M} \cdot G \cdot \Psi_{\Par{\eta_f,G}} + \frac{B^2}{\eta_s} 
 + \eta_s \cdot \frac{T \cdot \Par{M- \tau}}{M} \cdot \frac{G^2}{2}
\end{align*}
Furthermore, if $\Psi_{\Par{\eta_f,G}} \leq c \cdot \eta_f  G$ for some 
constant $c$, and $\eta_f = \frac{B \cdot \sqrt{M}}{G \cdot \sqrt{ T \cdot \tau 
\cdot \left( \frac{1}{2} + c \cdot \tau \right)}}$, $\eta_s = \frac{B \cdot 
\sqrt{2 M}}{G \cdot \sqrt{T \cdot \left( M - \tau \right) }}$, the regret is 
bounded by
\begin{align*}
& c \sqrt{\frac{T \tau}{M}} \cdot B G \sqrt{\frac{1}{2 } 
+ c \cdot \tau} + \sqrt{\frac{2 T \left( M - \tau \right)}{ M}} 
\cdot B G  
 ~=~ \mathcal{O} \Par{\sqrt{T} \cdot \Par{\sqrt{\frac{\tau^2}{M}} 
+ 1}}
\end{align*}
\end{theorem}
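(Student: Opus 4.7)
The plan is to split the total regret into the portion predicted with $w^f$ (the first $\tau$ rounds of each block, of total size $T\tau/M$) and the portion predicted with $w^s$ (the remaining $M-\tau$ rounds, of total size $T(M-\tau)/M$). The first three summands in the claimed bound will come from the $w^f$-part, and the last two from the $w^s$-part.

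For the $w^f$ sub-algorithm, the $k$-th inner update uses the delayed gradient $\nabla f_{T_1(k-\tau)}(w^f_{k-\tau})$ (zero during the initial $\tau$ steps). I would first apply convexity to bound the $w^f$-regret by $\sum_k\langle \nabla f_{T_1(k)}(w^f_k),\, w^f_k-w^*\rangle$, and then invoke the standard decomposition for delayed OMD: write $w^f_k-w^* = (w^f_{k+\tau}-w^*)+(w^f_k-w^f_{k+\tau})$. The inner product with the first piece aggregates exactly into the quantity controlled by the vanilla OMD bound applied to the delayed-gradient sequence, giving $B^2/\eta_f+\tfrac{\eta_f G^2}{2}\cdot\tfrac{T\tau}{M}$. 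The cross term is bounded by $G\cdot \|w^f_k-w^f_{k+\tau}\|\le G\tau\,\Psi_{(\eta_f,G)}$, by telescoping the single-step displacement bound $\tau$ times; summed over the $T\tau/M$ inner steps this contributes $\tfrac{T\tau^2}{M}\,G\,\Psi_{(\eta_f,G)}$. Boundary terms from the last $\tau$ inner steps are $O(\tau GB)$ and fit inside the final $O(\sqrt{T})$ slack.

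For the $w^s$ sub-algorithm the situation is cleaner: in its $j_s$-indexing there is no delay at all, since the update at step $k$ uses $\nabla f_{T_2(k)}(w^s_k)$, evaluated at the \emph{current} predictor. The only mismatch is that the updating function $f_{T_2(k)}$ differs from the predicted-upon function $f_{t(k)}$. This is where the within-block random permutation is essential. Let $\mathcal{F}_t$ be the $\sigma$-algebra generated by everything observed strictly before time $t$; then $w^s_{j_s(t)}$ is $\mathcal{F}_t$-measurable, and conditionally on $\mathcal{F}_t$ the positions $t-\tau$ and $t$ are symmetrically placed among the still-unrevealed slots of the current block, so the functions occupying them are exchangeable. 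Hence
\[
\mathbb{E}\!\left[\nabla f_t(w^s_{j_s(t)})\,\middle|\,\mathcal{F}_t\right]=\mathbb{E}\!\left[\nabla f_{t-\tau}(w^s_{j_s(t)})\,\middle|\,\mathcal{F}_t\right].
\]
Combining convexity with this identity and the tower property reduces the expected $w^s$-regret to $\mathbb{E}\sum_k\langle \nabla f_{T_2(k)}(w^s_k),\, w^s_k-w^*\rangle$, which the standard OMD bound controls by $B^2/\eta_s+\tfrac{\eta_s G^2}{2}\cdot\tfrac{T(M-\tau)}{M}$ --- precisely the last two terms.

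Adding the two contributions yields the first displayed inequality of the theorem. For the closed-form rate I would substitute $\Psi_{(\eta_f,G)}\le c\,\eta_f G$ and balance the $1/\eta$ and $\eta$ parts separately for each step size via AM-GM, which produces the stated expressions for $\eta_f,\eta_s$ and the $\mathcal{O}(\sqrt{T}(\sqrt{\tau^2/M}+1))$ bound. The main obstacle I anticipate is making the exchangeability argument in the $w^s$-analysis fully rigorous: one must carefully verify that all randomness entering $w^s_{j_s(t)}$ comes from block positions strictly below $t-\tau$, so that the positions $t-\tau$ and $t$ really are symmetrically distributed over the remaining unrevealed slots of the current block, and that the conditional expectation identity above genuinely holds.
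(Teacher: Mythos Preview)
Your proposal is correct and follows essentially the same approach as the paper's proof: the same two-part decomposition, the same $w^f_k-w^*=(w^f_{k+\tau}-w^*)+(w^f_k-w^f_{k+\tau})$ splitting for the delayed sub-algorithm with the cross term controlled by $\tau\,\Psi_{(\eta_f,G)}$, and the same within-block exchangeability argument to show $\mathbb{E}[\nabla f_{T_2(j)+\tau}(w^s_j)\mid w^s_j]=\mathbb{E}[\nabla f_{T_2(j)}(w^s_j)\mid w^s_j]$ for the second sub-algorithm. The only cosmetic difference is that the paper competes against separate comparators $w^*_f,w^*_s$ on the two index sets (which upper-bounds the regret against $w^*$), whereas you work directly with $w^*$; and the paper conditions on $w^s_j$ rather than on your filtration $\mathcal{F}_t$, but since $w^s_j$ is determined by $f_{T_2(1)},\ldots,f_{T_2(j-1)}$ these formulations are equivalent and your anticipated obstacle is handled exactly as you suggest.
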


When $M=\mathcal{O}(\tau)$, this bound is $O(\sqrt{\tau T})$. 
similar to the standard adversarial learning case. However, as $M$ increases, 
the regret gradually improves to $O(\sqrt{T}+\tau)$, which is 
the regret attainable in a purely stochastic setting with i.i.d. losses.
The full proof can be found in appendix~\ref{app:ThmProofAlg}, and we sketch 
below the main ideas.

First, using the definition of regret, we show that it is enough to upper-bound 
the regret of each of the two sub-algorithms separately. Then, by a standard 
convexity argument, we reduce this to bounding sums of terms of the form
$\mathbb{E} [ \langle w^f_t - w_f^\ast, \nabla f_t (w^f_t) 
\rangle ]$ for the first sub-algorithm, and $\mathbb{E} \left[ 
\langle w^s_t - w_s^\ast, \nabla f_t \Par{w^s_t} \rangle \right]$ 
for the second sub-algorithm (where $w^\ast_f$ and $w_s^\ast$ are the best 
fixed points in hindsight for the losses predicted on by the first and second 
sub-algorithms, respectively, and where for simplicity we assume the losses are 
differentiable). In contrast, we can use the standard analysis of mirror 
descent, using delayed gradients, to get a bound for the somewhat different 
terms $\mathbb{E} [ \langle w^f_t - w_f^\ast, \nabla f_{t-\tau} 
(w^f_{t-\tau}) \rangle ]$ for the first sub-algorithm, and 
$\mathbb{E} \left[ 
\langle w^s_t - w_s^\ast, \nabla f_{t-\tau} \Par{w^s_t} \rangle 
\right]$ for the second sub-algorithm. Thus, it remains to bridge between these 
terms.

Starting with the second sub-algorithm, we note that since we performed a 
random permutation within each block, the expected value of all loss functions 
within a block (in expectation over the block, and evaluated at a fixed point) 
is equal. Moreover, at any time point, the predictor $w^s$ maintained by the 
second sub-algorithm does not depend on the delayed nor the current loss 
function. Therefore, conditioned on $w^s_t$, and in expectation over the random 
permutation in the block, we have that 
\begin{align*}
 \mathbb{E}[\nabla 
f_t(w_t^s)]=\mathbb{E}[f_{t-\tau}(w_t^s)]
\end{align*}
from which it can be shown that
\begin{align*}
\mathbb{E} \left[ 
\langle  
w^s_t - w_s^\ast, \nabla f_t (w^s_t)\rangle\right] = \mathbb{E} \left[ 
\langle  
w^s_t - w_s^\ast, \nabla f_{t-\tau} (w^s_t)\rangle\right]
\end{align*}
Thus, up to a 
negligible factor having to do with the first few rounds of the game, the 
second sub-algorithm's expected regret does not suffer from the delayed 
feedback.

For the first sub-algorithm, we perform an analysis which does not rely on the 
random permutation. Specifically, we first show that since we care just about 
the sum of the losses, it is sufficient to bound the difference between 
$\mathbb{E} 
[ \langle w^f_t - w_f^\ast, \nabla f_t (w^f_t) 
\rangle ]$ and $\mathbb{E} [ \langle w^f_{t+\tau} - w_f^\ast, \nabla 
f_{t} 
(w^f_{t}) \rangle ]$. Using Cauchy-Shwartz, this difference can 
be upper bounded by $\| w_t^f - w_{t+\tau}^f \| \cdot \| \nabla f_t (w^f_t)
\|$, which in turn is at most $c \cdot \tau \cdot \eta_f \cdot G^2$ using our 
assumptions on the gradients of the losses and the distance between consecutive 
predictors produced by the first sub-algorithm. 

Overall, we get two regret bounds, one for each sub-algorithm. The regret of 
the first sub-algorithm scales with $\tau$, similar to the no-permutation 
setting, but the sub-algorithm handles only a small fraction of the iterations 
(the first $\tau$ in every block of size $M$). In the rest of the iterations,  
where we use the second sub-algorithm, we get a bound that resembles more the 
stochastic case, without such dependence on $\tau$. Combining the two, the 
result stated in Theorem \ref{thm:main} follows.

\subsection{Handling Variable Delay Size}
\label{sec:extensionTau}
So far, we discussed a setting where the feedback arrives with a fixed delay of 
size $\tau$. However, in many situations the feedback might arrive with a 
variable delay size $\tau_t$ at any iteration $t$, which may raise a few 
issues. First, feedback might arrive in 
an asynchronous fashion, causing us to update our predictor using gradients 
from time points further in past after already using more recent gradients. 
This complicates the analysis of the algorithm. A second, algorithmic problem, 
is that we could also possibly receive multiple feedbacks simultaneously, or no 
feedback at all, in certain iterations, since the delay is of variable size. 
One simple solution is to use buffering and reduce the problem to a constant 
delay setting. Specifically, we assume that all delays are bounded by some 
maximal delay size $\tau$. We would like to use one gradient to update our 
predictor at every iteration (this is mainly for ease of analysis, practically 
one could update the predictor with multiple loss functions in a single 
iteration). In order to achieve this, we can use a buffer to store loss 
functions that were received but have not been used to update the predictors 
yet. We define $Grad_f$ and $Grad_s$, two buffers that will contain gradients 
from time points in $T_1$ or $T_2$, correspondingly. Each buffer is of size 
$\tau$. If we denote by $\mathcal{F}_t$ the set of function that have arrived 
in time $t$, we can simply store loss functions that have arrived 
asynchronously in the buffers defined above, sort them in ascending order, and 
take the delayed loss function from exactly $\tau$ iterations back in the 
update step. This loss function must be in the appropriate buffer since the 
maximal delay size is $\tau$. From this moment on, the algorithm can proceed as 
usual and its analysis still applies. 

\section{Lower Bound}
\label{sec:lowerBound}
In this section, we give a lower bound in the setting where $M < 
\frac{\tau}{3}$ with all feedback having delay of exactly $\tau$. We will show that for this case, the regret bound cannot be improved by more than a constant factor over the bound of the adversarial online learning problem with a fixed delay of size $\tau$, namely $\Omega \left( \sqrt{\tau T} \right)$ for a sequence of length $T$. We hypothesize that 
this regret bound also cannot be significantly improved for any $M = O(\tau)$ (and not just $\tau/3$). However, proving this remains an open problem.

\begin{theorem}
For every (possible randomized) algorithm $A$ with a permutation window of size $M \leq \frac{\tau}{3}$, there exists a choice of linear, $1$-Lipschitz functions over $[-1,1] \subset \mathbb{R}$, such that the expected regret of $A$ after $T$ rounds (with respect to the algorithm's randomness), is
\begin{gather*}
\mathbb{E} \left[ \sum_{t=1}^{T} f_t \Par{w_t} - \sum_{t=1}^{T} f_t 
\Par{w^{\ast}} \right] = \Omega \Par{\sqrt{\tau T} \right)  
~~~~\text{where }~~~~ 
w^{\ast} = \underset{w \in \mathcal{W}}{\text{argmin }} \sum_{t=1}^T f_t \left( 
w } 
\end{gather*}
\label{lemma:lowerBound}
\end{theorem}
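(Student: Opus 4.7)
I plan to apply Yao's minimax principle: exhibit a distribution over loss sequences so that every algorithm (a choice of $\sigma$ plus an adaptive, possibly randomized, policy) incurs expected regret $\Omega(\sqrt{\tau T})$. Averaging then produces, for any fixed algorithm, a deterministic realization of the losses achieving the same bound, which is what the theorem asks for.

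\textbf{Construction.} Assume $\tau\mid T$ and partition $\{1,\ldots,T\}$ into $K=T/\tau$ consecutive blocks of length $\tau$. Draw $\epsilon_1,\ldots,\epsilon_K\in\{-1,+1\}$ i.i.d.\ Rademacher. In block $I$, declare the first $2M$ positions a \emph{buffer} with $h_t\equiv 0$ and the remaining $\tau-2M$ positions \emph{active} with $h_t(w)=\epsilon_I w$. Every $h_t$ is linear, $1$-Lipschitz, and defined on $[-1,1]$; since $M\leq\tau/3$, the active portion has width at least $\tau/3$. Because $\sum_t h_t(w)=(\tau-2M)\,w\sum_I\epsilon_I$, the best fixed $w^\ast=-\mathrm{sgn}(\sum_I\epsilon_I)$ gives total loss $-(\tau-2M)\lvert\sum_I\epsilon_I\rvert$, whose expectation is $-\Omega\bigl((\tau-2M)\sqrt{K}\bigr)=-\Omega(\sqrt{\tau T})$ by Khintchine's inequality.

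\textbf{Main step: the algorithm's expected loss is zero.} Fix any $\sigma$ with $\lvert\sigma(t)-t\rvert\leq M$ and any adaptive policy. The buffer positions contribute $0$, so it suffices to show that whenever $\sigma^{-1}(t)$ is active in its original block $I(t):=\lceil \sigma^{-1}(t)/\tau\rceil$, the play $w_t$ is independent of $\epsilon_{I(t)}$; then $\mathbb{E}[\epsilon_{I(t)}w_t]=\mathbb{E}[\epsilon_{I(t)}]\,\mathbb{E}[w_t]=0$ and summation finishes the claim. For independence, apply the window constraint twice. First, activity of $\sigma^{-1}(t)$ gives $(I(t)-1)\tau+2M+1\leq\sigma^{-1}(t)\leq I(t)\tau$, and combined with $\sigma^{-1}(t)\geq t-M$ this forces $t\leq I(t)\tau+M$. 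Second, $w_t$ can depend on $\epsilon_{I(t)}$ only through some past \emph{active} observation from block $I(t)$, i.e.\ some $s\leq t-\tau$ with $\sigma^{-1}(s)\geq(I(t)-1)\tau+2M+1$; since $\sigma^{-1}(s)\leq s+M\leq t-\tau+M$, existence of such $s$ forces $t\geq I(t)\tau+M+1$, contradicting the previous bound. Hence $w_t$ depends only on $\sigma$, the algorithm's internal randomness, and $\{\epsilon_J:J\neq I(t)\}$, so independence follows from independence of the Rademacher signs.

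\textbf{Where $M\leq\tau/3$ is used, and the main obstacle.} The whole argument hinges on fitting a buffer of width $2M$ inside each length-$\tau$ block while leaving a nontrivial active segment -- this is exactly the content of $M\leq\tau/3$, and identifying the correct buffer width is the main obstacle to plan around: any smaller and the two window inequalities above cease to contradict (the learner can leak information from block $I(t)$ across the delay), any larger and the active segment shrinks past $\tau/3$. Combining the two expectations, $\mathbb{E}[\mathrm{Regret}]\geq 0-(-\Omega(\sqrt{\tau T}))=\Omega(\sqrt{\tau T})$, and Yao's principle delivers the deterministic bad instance claimed in the theorem.
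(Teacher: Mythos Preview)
Your proposal is correct and follows the same high-level template as the paper: Yao's principle, Rademacher signs assigned to blocks, an independence argument showing the algorithm's expected loss is zero, and Khintchine's inequality for $w^\ast$. The difference is in the construction. The paper simply takes blocks of size $\tau/3$ with \emph{all} positions active (every $h_t$ in block $i$ equals $\alpha_i w$); then, because $M<\tau/3$, any two permuted occurrences of a sign from the same block are within distance $\tau/3-1+2M<\tau$ of each other, so none of the block's feedback can arrive before all of its losses are played. You instead keep blocks of size $\tau$ and carve out a width-$2M$ buffer of zero losses at the start of each block, then use the two window inequalities to derive the same ``no leakage'' contradiction.

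Both arguments are valid. The paper's version is a bit cleaner: no buffer is needed, so every loss is nontrivial and the counting is slightly shorter. Your buffer trick, on the other hand, makes the role of the permutation window very transparent (the two inequalities $t\le I\tau+M$ and $t\ge I\tau+M+1$ visibly clash), and it generalizes in an obvious way if one wanted to tune the buffer width against a different delay/window relation. Either way the final bound is the same $\Omega(\sqrt{\tau T})$.
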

For completeness, we we also provide in appendix~\ref{app:lowerBoundAdvers} a proof that when $M=0$ (i.e. no permutations allowed), then the worst-case 
regret is no better than $\Omega (\sqrt{\tau T})$. This is of course a special case of Theorem~\ref{lemma:lowerBound}, but applies to the standard adversarial online setting (without any local permutations), and the proof is simpler. The proof sketch for the setting where no permutation is allowed was already provided in \cite{langford2009slow}, and our contribution is in providing a full formal proof. 

The proof in the case where $M=0$ is based on linear losses of the form $f_t = 
\alpha_t \cdot w_t$ over $[-1,+1]$, where $\alpha_t\in \{-1,+1\}$. Without 
permutations, it is possible to prove a $\Omega(\sqrt{\tau T})$ lower bound by 
dividing the $T$ iterations into blocks of size $\tau$, where the $\alpha$ 
values of all losses at each block is the same and randomly chosen to equal 
either $+1$ or $-1$. Since the learner does not obtain any information about 
this value until the block is over, this reduces to adversarial online learning 
over $T/\tau$ rounds, where the regret at each round scales linearly with 
$\tau$, and overall regret at least $\Omega(\tau\sqrt{T/\tau}) = 
\Omega(\sqrt{\tau T})$. 

In the proof of theorem~\ref{lemma:lowerBound}, we show that by using a similar 
construction, even with permutations, having a permutation window less than 
$\tau/3$ still means that the $\alpha$ values would still be unknown until all loss functions of the block are processed, leading to the same lower bound up 
to constants. 

The formal proof appears in the appendix, but can be sketched as follows: 
first, we divide the $T$ iterations into blocks of size $\tau/3$. Loss 
functions within each block are identical, of the form $f_t = \alpha_t \cdot 
w_t$, and the value of $\alpha$ per block is chosen uniformly at random from 
$\{-1,+1\}$, 
as before. Since here, the permutation window $M$ is smaller than $\tau/3$, 
then even after permutation, the time difference between the first and last 
time we encounter an $\alpha$ that originated from a single block is less than 
$\tau$. This means that by the time we get any information on the $\alpha$ in a 
given block, the algorithm already had to process all the losses in the block, 
which leads to the same difficulty as the no-permutation 
setting. Specifically, since the predictors chosen by the algorithm when 
handling the losses of the block do not depend on the $\alpha$ value in that 
block, and that $\alpha$ is chosen randomly, we get that the expected loss of 
the algorithm at any time point $t$ equals $0$. Thus, the cumulative loss 
across the entire loss sequence is also $0$. In
contrast, for $w^\ast$, the optimal predictor in hindsight over the entire 
sequence, we can prove an expected accumulated loss of $-\Omega (\sqrt{\tau 
T})$ after $T$ iterations, using Khintchine inequality and the fact that the 
$\alpha$'s were randomly chosen per block. 
This leads us to a lower bound of expected regret of order $\sqrt{\tau T}$, for any algorithm with a local permutation window of size $M < \tau/3$.

\section{Experiments}
\label{sec:exp}
We consider the adversarial setting described in section~\ref{sec:lowerBound}, 
where an adversary chooses a sequence of functions such that every $\tau$ 
functions are identical, creating blocks of size $\tau$ of identical loss 
functions, of the form $f_t(w_t) = \alpha_t \cdot w_t$ where 
$\alpha_t$ is chosen randomly in $\{-1,+1\}$ for each block. In all experiments 
we use $T=10^5$ rounds, a delay parameter of $\tau=200$, set our step sizes 
according to the theoretical analysis, and report the mean regret value over 
1000 repetitions of the experiments.

\begin{figure*}[th]
\centering
\includegraphics[scale=0.18]{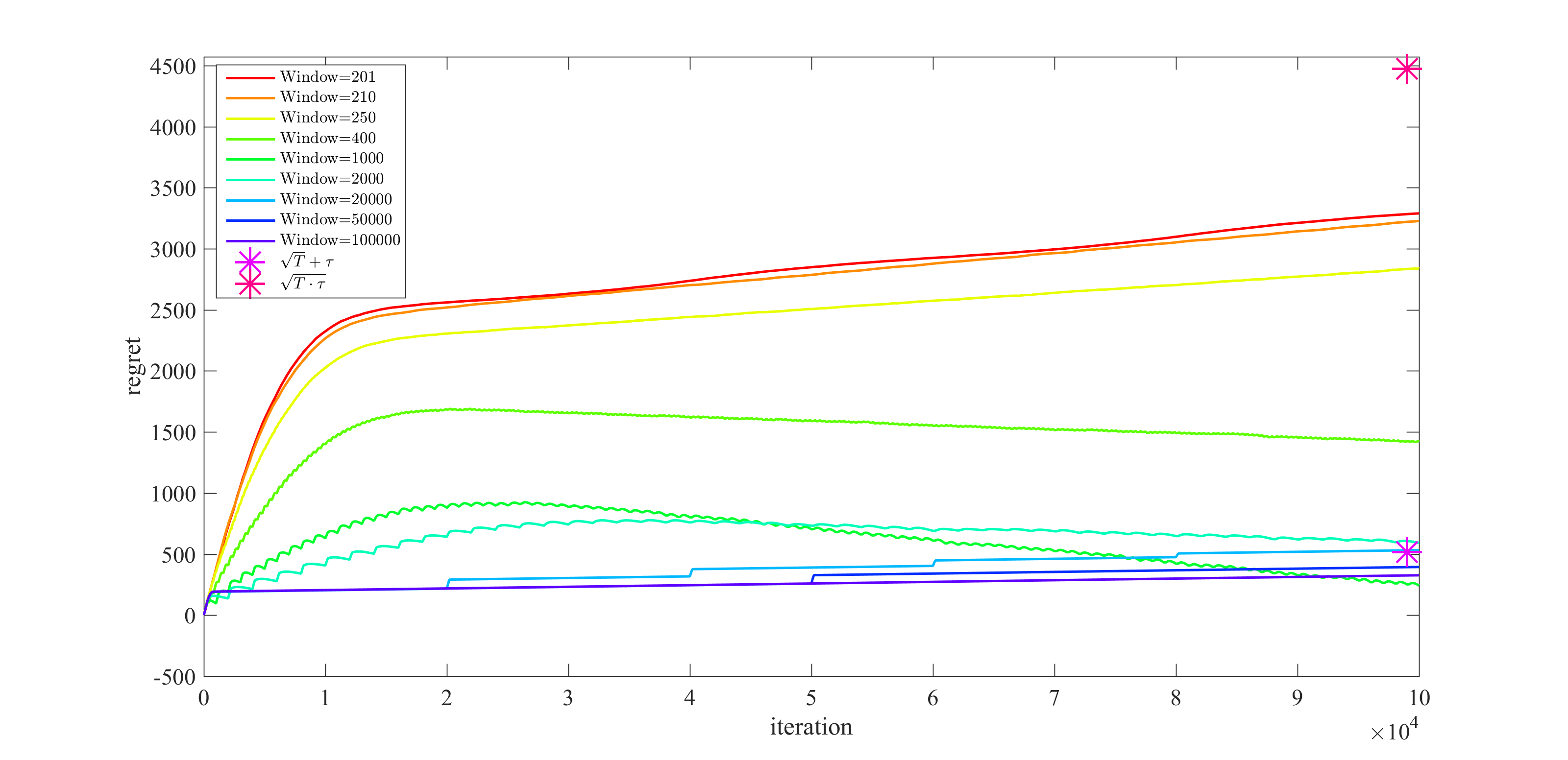}
\caption{Regret of the Delayed Permuted Mirror Descent algorithm, with local 
permutation in window sizes ranging from $M=\tau+1$ to $M=T$. A pink $\ast$ 
indicates the order of the stochastic bound ($\sqrt{T}+\tau$), and a red $\ast$ 
indicates the order of the adversarial bound ($\sqrt{\tau T}$). Regret is 
averaged over 1000 repetitions. Best viewed in color.}
\label{Fig:RegretVSiteration}
\end{figure*}

In our first experiment, we considered the behavior of our Delayed Permuted 
Mirror Descent algorithm, for window sizes $M>\tau$, ranging from $\tau+1$ to 
$T$. In this experiment, we chose the $\alpha$ values randomly, while ensuring 
a gap of 200 between the number of blocks with $+1$ values and the number of 
blocks with $-1$ values (this ensures that the optimal $w^\ast$ is a 
sufficiently strong competitor, since otherwise the setting is too ``easy'' and 
the algorithm can attain negative regret in some situations).  The results are 
shown in Figures~\ref{Fig:RegretVSiteration} and \ref{Fig:RegretVS_M}, where 
the first figure presents the accumulated regret of our algorithm over time, 
whereas the second figure presents the overall regret after $T$ rounds, as a 
function of the window size $M$.


When applying our 
algorithm in this setting with different values of $M > \tau$, ranging from 
$M=\tau + 1$ 
and up to $M=T$, we get a regret that scales from the order of the adversarial bound to the order of the stochastic bound depending on the window size, as expected by our analysis. For all window sizes greater than $5 \cdot \tau$, we get a regret that is in the order of the stochastic bound - this is not surprising, since after the 
permutation we get a sequence of functions that is very close to an i.i.d. 
sequence, in which case 
any algorithm can be shown to achieve $O(\sqrt{T})$ regret in expectation. Note 
that this performance is better than that predicted by our theoretical 
analysis, which implies an $O(\sqrt{T})$ behavior only when 
$M\geq\Omega(\tau^2)$. It is an open and interesting question whether it means 
that our analysis can be improved, or whether there is a harder construction 
leading to a tighter lower bound. 

\begin{figure}[h]
\centering
\includegraphics[scale=0.35]{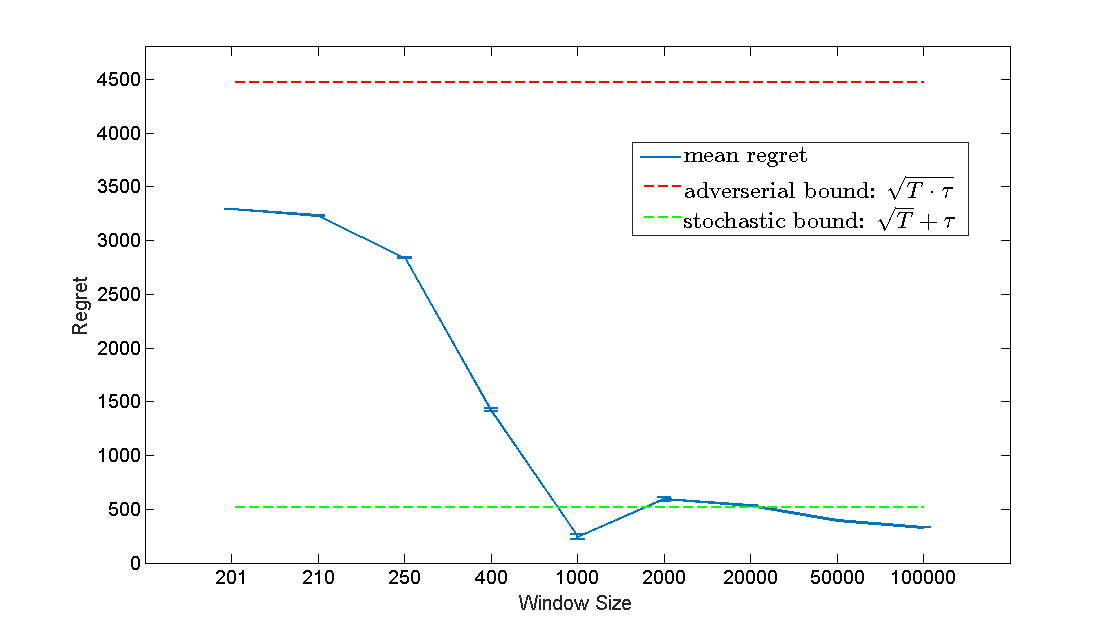}
\caption{Regret of the Delayed Permuted Mirror Descent algorithm for different 
window sizes, after $T=10^5$ iterations, with local permutation window sizes 
ranging from $M=\tau+1$ to $M=T$. Red dashed line (top) indicates the order of 
the adversarial bound ($\sqrt{\tau T}$) and green dashed line (bottom) 
indicates the order of the stochastic bound ($\sqrt{T}+\tau$). Regret is 
averaged over 1000 repetitions, error bars indicate standard error of the mean. 
Best viewed in color.}
\label{Fig:RegretVS_M}
\end{figure}

\begin{figure}[h]
\centering
\includegraphics[scale=0.35]{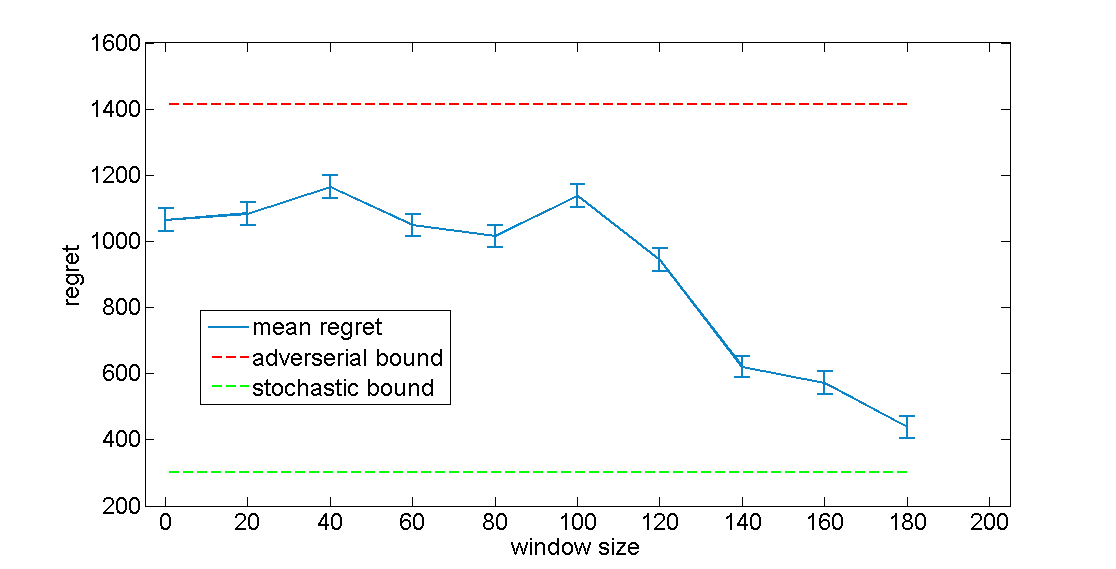}
\caption{Regret of the standard Online Gradient Descent algorithm, in a 
adversarialy designed setting as described in~\ref{sec:lowerBound}, and with 
local permutation in window sizes ranging from $M=0$ to $M=\frac{9}{10} \tau$. 
Red dashed line (top) indicates the order of the adversarial bound ($\sqrt{\tau 
T}$) and green dashed line (bottom) indicates the order of the stochastic bound 
($\sqrt{T}+\tau$). Regret is averaged over 1000 repetitions, error bars 
indicate standard error of the mean. Best viewed in color.}
\label{Fig:OGDfigure}
\end{figure}

In our second experiment, we demonstrate the brittleness of the lower bound 
construction for standard online learning with delayed feedback, focusing on 
the $M<\tau$ regime. Specifically,
we create loss functions with blocks as before (where following the lower bound 
construction, the $\alpha$ values in each block of size $\tau=200$ is chosen 
uniformly at random). Then, we perform a random permutation over consecutive 
windows of size $M$ (ranging from $M=0$ up to $M=\frac{9}{10}\tau$ in intervals 
of $\frac{1}{10}\tau$). Finally, we run standard Online Gradient Descent with 
delayed gradients (and fixed step size $1/\sqrt{T}$), on the permuted losses. 
The results are presented in Figure \ref{Fig:OGDfigure}.

For window sizes $M<\frac{\tau}{2}$ we see that the regret is close to the 
adversarial bound, whereas as we increase the window size the regret decreases 
towards the stochastic bound. This experiment evidently shows that this 
hardness construction is indeed brittle, and easily breaks in the face of local 
permutations, even for window sizes $M<\tau$. 

\section{Discussion}
\label{sec:disc}
We presented the OLLP setting, where a learner can locally permute the sequence of examples from which she learns. This setting can potentially allow for improved learning in many problems, where the worst-case regret is based on highly adversarial yet brittle constructions. 
In this paper, we focused on the problem of learning from delayed feedback in 
the OLLP setting, and showed how it is possible to improve the regret by 
allowing local permutations. Also, we proved a lower bound in the situation 
where the permutation window is significantly smaller than the feedback delay, 
and showed that in this case, permutations cannot allow for a better regret 
bound than the standard adversarial setting. We also provided some experiments, 
demonstrating the power of the setting as well as the feasibility of the 
proposed algorithm. An interesting open question is what minimal permutation 
size allows non-trivial regret improvement, and whether our upper bound in 
Theorem~\ref{thm:main} is tight. As suggested by our empirical 
experiments, it is possible that even small local permutations are enough to 
break highly adversarial sequences and improve performance in otherwise 
worst-case scenarios. 
Another interesting direction is to extend our results to a partial feedback 
(i.e. bandit) setting. Finally, it would be interesting to study other cases 
where local permutations 
allow us to interpolate between fully adversarial and more benign online 
learning scenarios.

\subsection*{Acknowledgements}

OS is supported in part by an FP7 Marie Curie CIG grant, the Intel ICRI-CI Institute, and Israel Science Foundation grant 425/13.

\clearpage

\bibliography{delayed_feedback_new}
\bibliographystyle{plainnat}

\appendix
\onecolumn
\section{Proofs}
\label{app:proofs}

\subsection{Analysis Of The Delayed Permuted Mirror Descent Algorithm}
\label{app:fullProofAlg}
We will use throughout the proofs the well known Pythagorean Theorem for Bregman divergences, and the 'projection' lemma that considers the projection step in the algorithm.

\begin{lemma}{Pythagorean Theorem for Bregman divergences} \\
Let $v$ be the projection of $w$ onto a convex set $\mathcal{W}$ w.r.t Bregman divergence $\triangle_\psi$: $v = argmin_{u \in \mathcal{W}} \triangle_\psi \left( u,w \right)$, then: $\triangle_\psi \left( u,w \right) \geq \triangle_\psi \left( u,v \right) + \triangle_\psi \left( v,w \right)$
\end{lemma}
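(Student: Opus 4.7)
The plan is to deduce the inequality from two standard ingredients: the three-point identity for Bregman divergences, and the first-order optimality condition for the constrained convex program defining $v$. Neither step is deep; the main care needed is to verify that $\triangle_\psi(\cdot,w)$ is a differentiable convex function of its first argument (which follows from the assumed strict convexity and continuous differentiability of $\psi$), so that standard variational inequalities apply.

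First I would write down the three-point identity, which is just algebra from the definition $\triangle_\psi(x,y)=\psi(x)-\psi(y)-\langle \nabla\psi(y),x-y\rangle$. Expanding each of the three Bregman terms and collecting gives
\begin{equation*}
\triangle_\psi(u,w) \;=\; \triangle_\psi(u,v) + \triangle_\psi(v,w) + \langle \nabla\psi(v)-\nabla\psi(w),\,u-v\rangle,
\end{equation*}
for every $u,v,w$ in the interior of the domain of $\psi$. This identity is the key reformulation: the desired inequality is equivalent to the last inner product being nonnegative for every $u\in\mathcal{W}$.

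Next I would invoke the first-order optimality condition at $v$. Since $v = \arg\min_{u\in\mathcal{W}} \triangle_\psi(u,w)$ and the map $u\mapsto \triangle_\psi(u,w)$ is convex with gradient $\nabla_u\triangle_\psi(u,w)=\nabla\psi(u)-\nabla\psi(w)$, the variational inequality for convex minimization over a convex set $\mathcal{W}$ yields, for every $u\in\mathcal{W}$,
\begin{equation*}
\langle \nabla\psi(v)-\nabla\psi(w),\,u-v\rangle \;\geq\; 0.
\end{equation*}
Plugging this into the three-point identity immediately gives $\triangle_\psi(u,w) \geq \triangle_\psi(u,v)+\triangle_\psi(v,w)$.

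The only subtlety I anticipate is a domain issue: if $v$ lies on the boundary of the domain of $\psi$, $\nabla\psi(v)$ might blow up (e.g.\ for the negative entropy on the simplex). The standard workaround is to assume, as in the mirror-descent literature, that $\psi$ is a Legendre function (so the minimizer $v$ lies in the relative interior of $\mathcal{W}\cap\mathrm{dom}(\psi)$), or equivalently that the minimizer of the Bregman projection is an interior point where $\nabla\psi$ is well defined. Under this standard regularity, the argument above is complete. Everything else is routine algebra and a one-line appeal to convex optimality.
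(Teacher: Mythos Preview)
Your proof is correct and is the standard argument via the three-point identity and the first-order optimality (variational) inequality for the Bregman projection. The paper itself does not prove this lemma at all; it simply states it as ``well known'' and uses it as a tool in the analysis, so there is no paper proof to compare against.
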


\begin{lemma}{Projection Lemma} \\
Let $\mathcal{W}$ be a closed convex set and let $v$ be the projection of $w$ onto $\mathcal{W}$, namely, \\
$v = \underset{x \in \mathcal{W}}{argmin } \| x-w \|^2$. Then, for every $u \in \mathcal{W}$, $\| w - u \|^2 - \| v - u \|^2 \geq 0$
\label{lemma:projection}
\end{lemma}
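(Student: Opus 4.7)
The plan is to derive the inequality from the first-order optimality (variational) characterization of the Euclidean projection onto a convex set, then expand a square. This is essentially the Euclidean specialization of the Pythagorean statement for Bregman divergences listed just above, with $\psi(x)=\tfrac12\|x\|^2$, but I would give a direct self-contained argument rather than invoking that lemma.

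First, I would observe that $v$ is the minimizer over the closed convex set $\mathcal{W}$ of the differentiable convex function $\phi(x)=\|x-w\|^2$, whose gradient is $\nabla\phi(x)=2(x-w)$. Standard convex optimization then yields the variational inequality
\begin{equation*}
\langle \nabla\phi(v),\, u-v\rangle \;\geq\; 0 \qquad \text{for all } u\in\mathcal{W},
\end{equation*}
which after dividing by $2$ becomes $\langle v-w,\,u-v\rangle \geq 0$, equivalently $\langle w-v,\,v-u\rangle \geq 0$. I would justify this in one line by noting that for any $u\in\mathcal{W}$ and $t\in[0,1]$, the convex combination $v+t(u-v)$ lies in $\mathcal{W}$, so $\phi(v+t(u-v))\geq \phi(v)$; dividing by $t$ and sending $t\downarrow 0$ gives the directional derivative inequality above.

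Next I would expand the squared norm by inserting $\pm v$:
\begin{equation*}
\|w-u\|^2 \;=\; \|(w-v) + (v-u)\|^2 \;=\; \|w-v\|^2 \;+\; 2\langle w-v,\,v-u\rangle \;+\; \|v-u\|^2.
\end{equation*}
By the variational inequality, the middle cross term is nonnegative, and $\|w-v\|^2$ is of course nonnegative. Hence
\begin{equation*}
\|w-u\|^2 \;\geq\; \|w-v\|^2 + \|v-u\|^2 \;\geq\; \|v-u\|^2,
\end{equation*}
which rearranges to $\|w-u\|^2-\|v-u\|^2\geq 0$, as required. In fact this gives the slightly stronger Pythagorean form $\|w-u\|^2\geq \|w-v\|^2+\|v-u\|^2$.

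There is no real obstacle here: the only nontrivial ingredient is the variational characterization of the projection, which follows directly from convexity of $\mathcal{W}$ and a one-sided directional derivative computation. The main thing to be careful about is simply asserting the right direction of the inequality $\langle w-v,v-u\rangle\geq 0$ (as opposed to $\leq 0$), which is what makes the cross term help rather than hurt.
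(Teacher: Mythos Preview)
Your argument is correct and is the standard derivation via the first-order optimality condition for projection onto a convex set. Note, however, that the paper does not actually supply a proof of this lemma: it is stated as a well-known fact alongside the Pythagorean theorem for Bregman divergences, and then invoked in the proof of Lemma~\ref{lemma:euclidean}. So there is no paper proof to compare against; your self-contained justification is entirely appropriate here.
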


The following lemma gives a bound on the distance between two consequent predictions when using the Euclidean mirror map:
\begin{lemma}
Let $g \in \mathbb{R}^n $ s.t. $\| g \|_2 < G$, $\mathcal{W}$ a convex set, and $\eta > 0$ be fixed. 
Let $w \in \mathcal{W}$ and $w_2 = w - \eta \cdot g$. Then, for $w' = \underset{u \in \mathcal{W}}{argmin} \| w_2 - u\|_2^2$, we have that $\| w - w' \| \leq \eta \cdot G$
\label{lemma:euclidean}
\end{lemma}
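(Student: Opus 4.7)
The plan is to derive the bound as an immediate consequence of the Projection Lemma stated just above (Lemma~\ref{lemma:projection}), by choosing the free point in that lemma to be $w$ itself. Concretely, I view $w_2 = w - \eta \cdot g$ as the point being projected and $w'$ as its Euclidean projection onto $\mathcal{W}$. Since $w \in \mathcal{W}$ by assumption, I can apply Lemma~\ref{lemma:projection} with the roles $w \leftarrow w_2$, $v \leftarrow w'$, and $u \leftarrow w$.

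Then the lemma directly yields
\begin{equation*}
\| w_2 - w \|_2^2 \;-\; \| w' - w \|_2^2 \;\geq\; 0,
\end{equation*}
i.e. $\| w' - w \|_2 \leq \| w_2 - w \|_2$. The right-hand side is just $\|\eta \cdot g\|_2 = \eta \cdot \|g\|_2 \leq \eta \cdot G$ by the assumed gradient bound, which closes the argument.

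There is essentially no obstacle here beyond a careful translation of notation between the statement of the projection lemma and this lemma; one could alternatively invoke the (stronger) Pythagorean theorem for Bregman divergences specialized to $\psi(x) = \tfrac{1}{2}\|x\|_2^2$, which gives $\|w_2 - w\|_2^2 \geq \|w' - w\|_2^2 + \|w_2 - w'\|_2^2$ and implies the claim in the same way, but the plain projection lemma already suffices. The geometric intuition to emphasize in the write-up is simply that $w$ is a feasible candidate for the projection of $w_2$ onto $\mathcal{W}$, so the true projection $w'$ cannot be further from $w_2$ than $w$ itself, and hence cannot be more than $\eta G$ away from $w$.
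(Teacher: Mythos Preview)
Your proposal is correct and is essentially identical to the paper's own proof: apply the Projection Lemma with $u=w$ to obtain $\|w'-w\|_2 \le \|w_2-w\|_2 = \|\eta g\|_2 \le \eta G$. The only minor caveat is that your closing ``intuition'' sentence (that $w'$ is no further from $w_2$ than $w$ is) by itself would only yield $\|w'-w\|\le 2\eta G$ via the triangle inequality; the sharper bound really does rely on the Projection Lemma inequality you already invoked, so I would drop or rephrase that last sentence in the write-up.
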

\begin{proof}
From the projection lemma: $\| w_2 - w \|_2^2 \geq \| w' - w \|_2^2$ and so: $\| w_2 - w \|_2 \geq \| w' - w \|_2$. From definition: $\| w_2 - w \|_2 = \| \eta \cdot g\|_2 \leq \eta \cdot G$. and so we get: $\| w' - w \|_2 \leq \| w_2 - w \|_2 \leq \eta \cdot G$
\end{proof}

We prove a modification of Lemma 2 given in \cite{menache2014demand} in order to bound the distance between two consequent predictions when using the negative entropy mirror map:
\begin{lemma}
Let $g \in \mathbb{R}^n $ s.t. $\| g \|_1 \leq G$ for some $G>0$ and let $\eta > 0$ be fixed, with $\eta < \frac{1}{\sqrt{2} \cdot G}$. For any distribution vector $w$ in the $n-simplex$, if we define $w'$ to be the new distribution vector 
\begin{align*}
\forall i \in \left\lbrace 1,...,n \right\rbrace, w'_i = \frac{w_i \cdot \exp \left(- \eta \cdot g_i \right)}{\sum_{j=1}^{n} w_j \cdot \exp \left( - \eta \cdot g_j \right)}
\end{align*}
Then $\| w - w' \|_1 \leq 3 \eta G$
\label{lemma:negEntropy}
\end{lemma}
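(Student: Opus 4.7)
The plan is to estimate $\|w-w'\|_1$ directly, isolating the normalization constant $Z := \sum_j w_j e^{-\eta g_j}$ and then controlling each piece with elementary inequalities for the exponential. The key setup observation is that the step-size restriction $\eta<1/(\sqrt{2}\,G)$, combined with $|g_i|\le\|g\|_1\le G$, forces $|\eta g_i|<1/\sqrt{2}<1$ on every coordinate; this is the regime in which the Taylor-type bound $|e^{-x}-1|\le |x|+x^2$ applies and in which $Z$ stays safely away from zero.

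I would start by writing $w'_i - w_i = w_i(e^{-\eta g_i}-1)/Z - w_i(Z-1)/Z$ and passing to absolute values, which gives
\begin{equation*}
\|w-w'\|_1 \;\le\; \frac{1}{Z}\Bigl(\sum_i w_i\,|e^{-\eta g_i}-1| + |Z-1|\Bigr) \;\le\; \frac{2}{Z}\sum_i w_i\,|e^{-\eta g_i}-1|,
\end{equation*}
where the second step uses the triangle inequality $|Z-1| = \bigl|\sum_i w_i(e^{-\eta g_i}-1)\bigr| \le \sum_i w_i|e^{-\eta g_i}-1|$. The numerator is then bounded via
\begin{equation*}
\sum_i w_i\,|e^{-\eta g_i}-1| \;\le\; \eta\sum_i w_i|g_i| + \eta^2\sum_i w_i g_i^2 \;\le\; \eta G + \eta^2 G^2,
\end{equation*}
using the Taylor bound above together with $\sum_i w_i|g_i|\le\|g\|_\infty\le G$ and $\sum_i w_i g_i^2\le\|g\|_\infty^2\le G^2$. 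For the denominator, Jensen's inequality together with the elementary estimate $e^{-x}\ge 1-x$ gives $Z\ge e^{-\eta\langle g,w\rangle}\ge 1-\eta G$, which is strictly positive thanks to $\eta G<1/\sqrt{2}$. Combining the pieces yields an expression of the form $\|w-w'\|_1\le 2\eta G(1+\eta G)/(1-\eta G)$, and one then collapses this to the desired $3\eta G$ by leveraging the strict threshold $\eta G<1/\sqrt{2}$.

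The main obstacle is precisely this last step of constant tracking: a crude substitution $\eta G\le 1$ in $(1+\eta G)/(1-\eta G)$ would give a multiplicative constant noticeably larger than $3$, so the final bookkeeping has to exploit the sharper $\eta G<1/\sqrt{2}$ threshold, possibly via a second-order expansion $Z=1-\eta\langle g,w\rangle+O(\eta^2 G^2)$ paired with $|\langle g,w\rangle|\le G$, or a sign-based split of the coordinates. As a sanity check, a cleaner alternative route via Pinsker's inequality applied to $\mathrm{KL}(w\|w')=\log Z+\eta\langle g,w\rangle$, together with Hoeffding's lemma bounding $\mathrm{KL}(w\|w')\le \eta^2 G^2/2$, yields the even stronger estimate $\|w-w'\|_1\le \eta G$; this confirms that the target $3\eta G$ is comfortably within reach for any careful direct argument.
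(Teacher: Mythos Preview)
Your direct decomposition does \emph{not} close at the constant $3$: plugging $\eta G \to 1/\sqrt{2}$ into your bound $2\eta G(1+\eta G)/(1-\eta G)$ gives roughly $11.7\,\eta G$, and no amount of ``exploiting the sharper threshold'' will repair this, since your inequalities are essentially tight at that endpoint. The second-order and sign-split fixes you suggest do not change the leading behaviour near $\eta G=1/\sqrt{2}$ either, because the loss comes from the $1/(1-\eta G)$ factor in $1/Z$, not from the Taylor remainder. So as a direct argument this route is genuinely stuck.

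That said, your Pinsker alternative is a complete and correct proof --- and in fact gives the stronger conclusion $\|w-w'\|_1\le \eta G$. The computation $\mathrm{KL}(w\|w')=\log Z+\eta\langle g,w\rangle$ is right, Hoeffding's lemma applies since $|g_i|\le\|g\|_\infty\le\|g\|_1\le G$, and Pinsker then yields $\|w-w'\|_1\le\sqrt{2\,\mathrm{KL}}\le\eta G$. You should promote this from ``sanity check'' to the actual proof.

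For comparison, the paper takes a different direct route: it applies H\"older (using $\|w\|_1=1$) to reduce to $\max_i\bigl|1-e^{-\eta g_i}/Z\bigr|$, then sandwiches this quantity coordinatewise via the two-sided bound $1-x\le e^{-x}\le 1/(1+x)$ on both numerator and denominator, eventually arriving at the interval $[-\eta^2G^2/(1-\eta^2G^2),\,2\eta G+\eta^2G^2]$ and hence the constant $3$. Your Pinsker argument is cleaner and sharper; the paper's approach is more elementary but requires tracking both sides of the exponential inequality rather carefully.
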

\begin{proof}
Since $\| g \|_\infty < G$ and $\eta < \frac{1}{\sqrt{2} \cdot G}$ we get that $\forall i : | \eta \cdot g_i | < 1$. 
We have that:
\begin{align*}
\| w - w' \|_1 = \sum_{i=1}^n | w_i - w'_i| = \sum_{i=1}^n \left| w_i \cdot \left( 1- \frac{ \exp \left(- \eta \cdot g_i \right)}{\sum_{j=1}^{n} w_j \cdot \exp \left( - \eta \cdot g_j \right)} \right) \right| 
\end{align*}
Since $\| w \|_1 = 1$, we can apply Holder's inequality, and upper bound the above by
\begin{align*}
\underset{i}{max} \left| 1- \frac{ \exp \left(- \eta \cdot g_i \right)}{\sum_{j=1}^{n} w_j \cdot \exp \left( - \eta \cdot g_j \right)} \right|
\end{align*}
Using the inequality $1-x \leq \exp \left( - x \right) \leq \frac{1}{1+x}$ for all $| x | \leq 1$, we know that
\begin{align*}
1 - \eta \cdot g_i \leq \exp \left( - \eta \cdot g_i \right) \leq \frac{1}{1+\eta \cdot g_i}
\end{align*}
and since $- \eta G \leq \eta \cdot g_i \leq \eta G$ we have that
\begin{align*}
1 - \eta \cdot g_i \leq \exp \left( - \eta \cdot g_i \right) \leq \frac{1}{1+\eta \cdot g_i} \leq \frac{1}{1-\eta G}  
\end{align*}
and so we get: 
\begin{align*}
1 - \frac{\frac{1}{1+ \eta g_i}}{1+ \eta G} \leq 1- \frac{ \exp \left(- \eta \cdot g_i \right)}{\sum_{j=1}^{n} w_j \cdot \exp \left( - \eta \cdot g_j \right)} \leq 1 - \frac{1- \eta \cdot g_i}{\frac{1}{1-\eta G}}
\end{align*}
Using again the fact that $- \eta G \leq \eta \cdot g_i \leq \eta G$, we have
\begin{align*}
1 - \frac{\frac{1}{1 - \eta G}}{1+ \eta G} \leq 1- \frac{ \exp \left(- \eta \cdot g_i \right)}{\sum_{j=1}^{n} w_j \cdot \exp \left( - \eta \cdot g_j \right)} \leq 1 - \frac{1 - \eta G}{\frac{1}{1-\eta G}}
\end{align*}
\begin{align*}
\Longrightarrow \frac{- \eta^2 G^2}{1 - \eta^2 G^2} = 1 - \frac{1}{1 - \eta^2 G^2} \leq 1- \frac{ \exp \left(- \eta \cdot g_i \right)}{\sum_{j=1}^{n} w_j \cdot \exp \left( - \eta \cdot g_j \right)} \leq 1 - \left( 1- \eta G\right)^2 = 2 \eta G + \eta^2 G^2
\end{align*}
Now, since $\eta G < 1$, we get that:
\begin{align*}
\frac{- \eta^2 G^2}{1 - \eta^2 G^2} \leq 1- \frac{ \exp \left(- \eta \cdot g_i \right)}{\sum_{j=1}^{n} w_j \cdot \exp \left( - \eta \cdot g_j \right)} \leq 2 \eta G + \eta G = 3 \eta G
\end{align*}
and so we can conclude that 
\begin{align*}
\underset{i}{max} \left| 1- \frac{ \exp \left(- \eta \cdot g_i \right)}{\sum_{j=1}^{n} w_j \cdot \exp \left( - \eta \cdot g_j \right)} \right| \leq \underset{i}{max} \left( \left| \frac{- \eta^2 G^2}{1 - \eta^2 G^2} \right|, \left| 3 \eta G \right| \right) \leq \underset{i}{max} \left(  \frac{ \eta G}{1 - \eta^2 G^2} , 3 \eta G \right)
\end{align*}
Since $\eta <\frac{1}{\sqrt{2} G}$, we get $\Par{\eta \cdot G}^2 < \frac{1}{2}$. Thus we get:
\begin{align*}
\underset{i}{max} \left| 1- \frac{ \exp \left(- \eta \cdot g_i \right)}{\sum_{j=1}^{n} w_j \cdot \exp \left( - \eta \cdot g_j \right)} \right| \leq \underset{i}{max} \left(  2 \eta G , 3 \eta G \right) \leq 3 \eta G
\end{align*}
which gives us our desired bound.
\end{proof}

With the above two lemmas in hand, we bound the distance between consequent predictors by $ c \eta G$, where $c$ is a different constant in each mirror map: $c=1$ for the euclidean case, and $c=3$ for the negative entropy mirror map.\\
Note that both mapping are $1$-strongly convex with respect to their respective norms. For other mappings with a different strong convexity constant, one would need to scale the step sizes according to the strong convexity parameter in order to get the bound.

\subsubsection{Proof of Theorem \ref{thm:main}}
\label{app:ThmProofAlg}
We provide an upper bound on the regret of the algorithm, by competing against the best fixed action in each one of the sets of iterations- the first $\tau$ iterations and the last $M-\tau$ iterations in each block. This is an upper bound on competing against the best fixed predictor in hindsight for the entire sequence.
Formally, we bound:
\begin{align*}
R(T) & = \mathbb{E} \left[ \sum_{t=1}^T f_t \left( w_t \right) - \sum_{t=1}^T f_t \left( w^{\ast} \right) \right] \\ 
& \leq \mathbb{E} \left[ \sum_{i=0}^{\frac{T}{M} -1} \left( \sum_{t=M\cdot i +1}^{M \cdot i + \tau} f_t \left( w_t \right) - f_t \left( w_f^{\ast} \right) + \sum_{t=M \cdot i + \tau +1}^{M \cdot \left( i+1 \right) } f_t \left( w_t \right) - f_t \left( w_s^{\ast} \right) \right) \right] \\
& \text{where } \\
& w_f^{\ast} = \underset{w \in \mathcal{W}}{\text{argmin }}  \sum_{i=0}^{\frac{T}{M} -1} \sum_{t=M\cdot i +1}^{M \cdot i + \tau} f_t \left( w \right) \text{and } w_s^{\ast} = \underset{w \in \mathcal{W}}{\text{argmin }} \sum_{i=0}^{\frac{T}{M} -1} \sum_{t=M\cdot i + \tau +1}^{M \cdot \left(i+1\right) } f_t \left( w \right) &
\end{align*}
where expectation is taken over the randomness of the algorithm.

The diameter of the domain $\mathcal{W}$ is bounded by $B^2$, and so $\triangle_\psi \left( w_f^\ast, w^f_0 \right) \leq B^2$ and $\triangle_\psi \left( w_s^\ast, w^s_0 \right)\leq B^2$. 
We start with a general derivation that will apply both for $w^s$ and for $w^f$ simultaneously. For the following derivation we use the notation $w_j, w_{j+1}$ omitting the $f,s$ superscript, for denoting subsequent updates of the predictor vector, whether it is $w^s$ or $w^f$. 

Denote by $g_j$ the gradient used to update $w_j$, i.e., $\nabla \psi \left( w_{j + \frac{1}{2}} \right) = \nabla \psi \left( w_j \right) - \eta \cdot g_j$, and $w_{j+1} = \underset{w \in \mathcal{W}}{argmin} \triangle_{\psi} \left( w, w_{j+\frac{1}{2}} \right)$. \\
Looking at the update step in the algorithm, we have that $g_j = \frac{1}{\eta} \cdot \left( \nabla \psi \left( w_j \right) - \nabla \psi \left( w_{j + \frac{1}{2}} \right) \right)$ and thus:
\begin{align*}
\left\langle w_j - w^{\ast}, g_j \right\rangle & = \frac{1}{\eta} \cdot \left\langle w_j - w^{\ast}, \left( \nabla \psi \left( w_j \right) - \nabla \psi \left( w_{j + \frac{1}{2}} \right) \right) \right\rangle \\
& = \frac{1}{\eta} \cdot \left( \triangle_\psi \left( w^{\ast}, w_j \right) + \triangle_\psi \left( w_j, w_{j + \frac{1}{2}} \right) - \triangle_\psi \left( w^{\ast}, w_{j + \frac{1}{2}} \right) \right)
\end{align*}
We now use the Pythagorean Theorem to get:
\begin{align*}
\leq \frac{1}{\eta} \cdot \left( \triangle_\psi \left( w^{\ast}, w_j \right) + \triangle_\psi \left( w_j, w_{j + \frac{1}{2}} \right) - \triangle_\psi \left( w^{\ast}, w_{j+1} \right)  - \triangle_\psi \left( w_{j+1}, w_{j+\frac{1}{2}} \right) \right)
\end{align*}
When we sum terms for all updates of the predictor, $w^f$ or $w^s$ respectively, the terms $\triangle_\psi \left( w^{\ast}, w_j \right) - \triangle_\psi \left( w^{\ast}, w_{j+1} \right)$ will result in a telescopic sum, canceling all terms expect the first and last. Thus we now concentrate on bounding the term: $\triangle_\psi \left( w_j, w_{j + \frac{1}{2}} \right) - \triangle_\psi \left( w_{j+1}, w_{j+\frac{1}{2}} \right)$.
\begin{align*}
& \triangle_\psi \left( w_j, w_{j + \frac{1}{2}} \right) - \triangle_\psi \left( w_{j+1}, w_{j+\frac{1}{2}} \right) = \psi \left( w_j \right)  - \psi \left( w_{j+1} \right) - \left\langle w_j - w_{j+1}, \nabla \psi \left( w_{j + \frac{1}{2}} \right) \right\rangle \\
& \underset{ \mbox{ $\psi$ 1-strong convex}}{\leq} \left\langle w_j - w_{j+1}, \nabla \psi \left(w_j \right) - \nabla \psi \left( w_{j+\frac{1}{2}} \right) \right\rangle - \frac{1}{2} \cdot \| w_j - w_{j+1} \|^2 \\
& = \left\langle w_j - w_{j+1}, \eta \cdot g_j \right\rangle - \frac{1}{2} \cdot \| w_j - w_{j+1} \|^2 \\
& \leq \eta \cdot G \cdot \| w_j - w_{j+1} \| - \frac{1}{2} \cdot \| w_j - w_{j+1} \|^2 \\
& \leq \frac{\left( \eta \cdot G \right)^2}{2} &
\end{align*}
where the last inequality stems from the fact that $\left( \| w_j -w_{j+1} \| \cdot \frac{\sqrt{1}}{\sqrt{2}} - \frac{\eta \cdot G}{\sqrt{2}} \right)^2 \geq 0$ 

We now continue with the analysis referring to $w^f$ and $w^s$ separately.
Summing over $j=\tau + 1$ to $\left( \frac{T}{M} +1 \right) \cdot \tau$ for $w^f$ (these are the $\frac{T}{M} \tau$ iterations in which the first sub-algorithm is in use), and from $j=1$ to $\frac{T}{M} \cdot \left( M - \tau \right)$ for $w^s$ (these are the $\frac{T}{M} (M - \tau)$ iterations in which the second sub-algorithm is in use) we get: 

For $w^f$: 
\begingroup
\allowdisplaybreaks
\begin{align*}
& \sum_{j=\tau + 1}^{\left( \frac{T}{M} +1 \right) \cdot \tau} \left\langle w^f_j - w_f^{\ast}, g_j \right\rangle \\
& = \sum_{j=\tau + 1}^{\left( \frac{T}{M} +1 \right) \cdot \tau} \left\langle w^f_j - w_f^{\ast}, \nabla f_{T_1 \left( j - \tau \right)} \left( w_{j-\tau}^f \right) \right\rangle \\
& = \sum_{j=\tau + 1}^{\left( \frac{T}{M} +1 \right) \cdot \tau} \frac{1}{\eta} \cdot \left\langle w^f_j - w_f^{\ast}, \left( \nabla \psi \left( w^f_j \right) - \nabla \psi \left( w^s_{j + \frac{1}{2}} \right) \right) \right\rangle \\
& = \sum_{j=\tau + 1}^{\left( \frac{T}{M} +1 \right) \cdot \tau} \frac{1}{\eta} \cdot \left( \triangle_\psi \left( w_f^{\ast}, w^f_j \right) + \triangle_\psi \left( w^f_j, w^f_{j + \frac{1}{2}} \right) - \triangle_\psi \left( w_f^{\ast}, w^f_{j + \frac{1}{2}} \right) \right) \\
& \leq \sum_{j=\tau + 1}^{\left( \frac{T}{M} +1 \right) \cdot \tau} \frac{1}{\eta} \cdot \left( \triangle_\psi \left( w_f^{\ast}, w^f_j \right) + \triangle_\psi \left( w^f_j, w^f_{j + \frac{1}{2}} \right) - \triangle_\psi \left( w_f^{\ast}, w^f_{j+1} \right)  - \triangle_\psi \left( w^f_{j+1}, w^f_{j+\frac{1}{2}} \right) \right) \\
& \leq \frac{1}{\eta} \cdot \sum_{j=\tau + 1}^{\left( \frac{T}{M} +1 \right) \cdot \tau} \triangle_\psi \left( w_f^{\ast}, w^f_j \right) - \triangle_\psi \left( w_f^{\ast}, w^f_{j+1} \right) + \frac{1}{\eta} \cdot \sum_{j=\tau + 1}^{\left( \frac{T}{M} +1 \right) \cdot \tau} \triangle_\psi \left( w^f_j, w^f_{j + \frac{1}{2}} \right) - \triangle_\psi \left( w^f_{j+1}, w^f_{j+\frac{1}{2}} \right) \\
& = \frac{1}{\eta} \cdot \triangle_\psi \left( w_f^{\ast}, w^f_{\tau +1} \right) - \triangle_\psi \left( w_f^{\ast}, w^f_{\left( \frac{T}{M} +1 \right) \cdot \tau} \right) + \frac{1}{\eta} \cdot \sum_{j=\tau + 1}^{\left( \frac{T}{M} +1 \right) \cdot \tau} \triangle_\psi \left( w_j^f, w^f_{j + \frac{1}{2}} \right) - \triangle_\psi \left( w^f_{j+1}, w^f_{j+\frac{1}{2}} \right) \\
& \leq \frac{1}{\eta_f} \cdot \triangle_\psi \left( w_f^{\ast}, w^f_{\tau +1} \right) + \frac{1}{\eta_f} \cdot \frac{T}{M} \cdot \tau \cdot \frac{\left( \eta_f \cdot G \right)^2 }{2} \\
& \leq \frac{1}{\eta_f} \cdot B^2 + \frac{T}{M} \cdot \tau \cdot \frac{ \eta_f \cdot G^2 }{2} &
\end{align*}%
\endgroup

For $w^s$: 
\begingroup
\allowdisplaybreaks
\begin{align*}
& \sum_{j=1}^{\frac{T}{M} \cdot \left( M - \tau \right)} \left\langle w^s_j - w_s^{\ast}, g_j \right\rangle \\
& = \sum_{j=1}^{\frac{T}{M} \cdot \left( M - \tau \right)} \left\langle w^s_j - w_s^{\ast}, \nabla f_{T_2 \left( j \right)} \left( w_j^s \right) \right\rangle \\
& = \sum_{j=1}^{\frac{T}{M} \cdot \left( M - \tau \right)} \frac{1}{\eta} \cdot \left\langle w^s_j - w_s^{\ast}, \left( \nabla \psi \left( w^s_j \right) - \nabla \psi \left( w^s_{j + \frac{1}{2}} \right) \right) \right\rangle \\
& = \sum_{j=1}^{\frac{T}{M} \cdot \left( M - \tau \right)} \frac{1}{\eta} \cdot \left( \triangle_\psi \left( w_s^{\ast}, w^s_j \right) + \triangle_\psi \left( w^s_j, w^s_{j + \frac{1}{2}} \right) - \triangle_\psi \left( w_s^{\ast}, w^s_{j + \frac{1}{2}} \right) \right) \\
& \leq \sum_{j=1}^{\frac{T}{M} \cdot \left( M - \tau \right)} \frac{1}{\eta} \cdot \left( \triangle_\psi \left( w_s^{\ast}, w^s_j \right) + \triangle_\psi \left( w^s_j, w^s_{j + \frac{1}{2}} \right) - \triangle_\psi \left( w_s^{\ast}, w^s_{j+1} \right)  - \triangle_\psi \left( w^s_{j+1}, w^s_{j+\frac{1}{2}} \right) \right) \\
& \leq \frac{1}{\eta} \cdot \sum_{j=1}^{\frac{T}{M} \cdot \left( M - \tau \right)} \triangle_\psi \left( w_s^{\ast}, w^s_j \right) - \triangle_\psi \left( w_s^{\ast}, w^s_{j+1} \right) + \frac{1}{\eta} \cdot \triangle_\psi \left( w^s_j, w^s_{j + \frac{1}{2}} \right) - \triangle_\psi \left( w^s_{j+1}, w^s_{j+\frac{1}{2}} \right) \\
& = \frac{1}{\eta} \cdot \triangle_\psi \left( w_s^{\ast}, w^s_{1} \right) - \triangle_\psi \left( w_s^{\ast}, w^s_{\left( \frac{T}{M} +1 \right) \cdot \tau} \right) + \frac{1}{\eta} \cdot \sum_{j=1}^{\frac{T}{M} \cdot \left( M - \tau \right)} \triangle_\psi \left( w^s_j, w^s_{j + \frac{1}{2}} \right) - \triangle_\psi \left( w^s_{j+1}, w^s_{j+\frac{1}{2}} \right) \\
& \leq \frac{1}{\eta_s} \cdot \triangle_\psi \left( w_s^{\ast}, w^s_{1} \right) + \frac{1}{\eta_s} \cdot \frac{T}{M} \cdot \left( M - \tau \right) \cdot \frac{\left( \eta_s \cdot G \right)^2 }{2} \\
& \leq \frac{1}{\eta_s} \cdot B^2 + \frac{T}{M} \cdot \left( M - \tau \right) \cdot \frac{ \eta_s \cdot G^2 }{2} &
\end{align*}%
\endgroup

We are after bounding the regret, which in itself is upper bounded by the sum of the regret accumulated by each sub-algorithm, considering iterations in the first $\tau$ and last $M-\tau$ per block separately, as mentioned above.
Using the convexity of $f_t$ for all $t$, we bound these terms:
\begin{align*}
 \mathbb{E} & \left[ \sum_{i=0}^{\frac{T}{M}-1} \sum_{t=M \cdot i +1}^{M \cdot i +\tau} f_t \left( w_t \right) -  f_t \left( w_f^{\ast} \right) +\sum_{i=0}^{\frac{T}{M}-1} \sum_{t=M \cdot i + \tau +1}^{M \cdot \left(i+1 \right)} f_t \left( w_t \right) -  f_t \left( w_s^{\ast} \right) \right] \\
\leq \mathbb{E} & \left[ \sum_{i=0}^{\frac{T}{M}-1} \sum_{t=M \cdot i +1}^{M \cdot i +\tau} \left\langle  w_t - w_f^\ast, \nabla f_t \left( w_t \right) \right\rangle +\sum_{i=0}^{\frac{T}{M}-1} \sum_{t=M \cdot i + \tau +1}^{M \cdot \left(i+1 \right)} \left\langle  w_t - w_s^\ast, \nabla f_t \left( w_t \right) \right\rangle \right] \\
 = \mathbb{E} & \left[ \sum_{j=1}^{\frac{T}{M} \cdot \tau} \left\langle  w_j^f - w_f^\ast, \nabla f_{T_1 \left( j \right)} \left( w_j^f \right) \right\rangle + \sum_{j=1}^{\frac{T}{M} \cdot \left( M - \tau \right) } \left\langle  w_j^s - w_s^\ast, \nabla f_{T_2 \left( j \right) + \tau} \left( w_j^s \right) \right\rangle \right] &
\end{align*}
In the last equality of the above derivation, we simply replace notations, writing the gradient $\nabla f_t \Par {w_t}$ in notation of $T_1$ and $T_2$. $T_1$ contains all time points in the first $\tau$ iterations of each block, and $T_2$ contains all time points in the first $M-\tau$ iterations of each block.

Note that what we have bounded so far is $\sum_{j=\tau + 1}^{( \frac{T}{M} +1 ) \cdot \tau} \langle w^f_j - w_f^{\ast}, \nabla f_{T_1 \left( j - \tau \right)} ( w_{j- \tau }^f ) \rangle$ for $w^f$ and $\sum_{j=1}^{\frac{T}{M} \cdot \left( M - \tau \right)} \left\langle w^s_j - w_s^{\ast}, \nabla f_{T_2 \left( j \right)} \left( w_j^s \right) \right\rangle$ for $w^s$, which are not the terms we need to bound in order to get a regret bound since they use the \emph{delayed} gradient, and so we need to take a few more steps in order to be able to bound the regret.

We begin with $w^f$:
\begingroup
\allowdisplaybreaks
\begin{align*}
\sum_{i=0}^{\frac{T}{M}-1} \sum_{t=M \cdot i +1}^{M \cdot i +\tau} \left\langle  w_t - w_f^\ast, \nabla f_t \left( w_t \right) \right\rangle & = \sum_{j=1}^{\frac{T}{M} \cdot \tau} \left\langle  w_j^f - w_f^\ast, \nabla f_{T_1 \left( j \right)} \left( w_j^f \right) \right\rangle \\
& = \sum_{j=1}^{\frac{T}{M} \cdot \tau} \left\langle  w_{j+\tau}^f - w_f^\ast, \nabla f_{T_1 \left( j \right)} \left( w_{j}^f \right) \right\rangle + \left\langle  w_{j}^f - w^f_{j + \tau}, \nabla f_{T_1 \left( j \right)} \left( w_{j}^f \right) \right\rangle \\
& = \sum_{j=\tau + 1}^{ \left(\frac{T}{M} + 1 \right) \cdot \tau} \left\langle  w_{j}^f - w_f^\ast, \nabla f_{T_1 \left( j - \tau \right)} \left( w_{j - \tau}^f \right) \right\rangle + \left\langle  w_{j - \tau}^f - w^f_{j}, \nabla f_{T_1 \left( j -\tau \right)} \left( w_{j - \tau}^f \right) \right\rangle \\
& \leq \frac{1}{\eta_f} \cdot B^2 + \frac{T}{M} \cdot \tau \cdot \frac{ \eta_f \cdot G^2 }{2} + \sum_{j=\tau + 1}^{ \left(\frac{T}{M} + 1 \right) \cdot \tau} \left\langle  w_{j - \tau}^f - w^f_{j}, \nabla f_{T_1 \left( j -\tau \right)} \left( w_{j - \tau}^f \right) \right\rangle \\
& \leq \frac{1}{\eta_f} \cdot B^2 + \frac{T}{M} \cdot \tau \cdot \frac{ \eta_f \cdot G^2 }{2} + \sum_{j=\tau + 1}^{ \left(\frac{T}{M} + 1 \right) \cdot \tau} \| w_{j - \tau}^f - w^f_{j} \| \cdot \| \nabla f_{T_1 \left( j -\tau \right)} \left( w_{j - \tau}^f \right) \| \\
& \leq \frac{1}{\eta_f} \cdot B^2 + \frac{T}{M} \cdot \tau \cdot \frac{ \eta_f \cdot G^2 }{2} + \sum_{j=\tau + 1}^{ \left(\frac{T}{M} + 1 \right) \cdot \tau} \sum_{i=1}^\tau \| w^f_{j-i} - w^f_{j-i+1} \| \cdot G &
\end{align*}%
\endgroup
The last term in the above derivation, is the sum of differences between consecutive predictors. This difference, is determined by the mirror map in use, the step size $\eta_f$, and the bound over the norm of the gradient used in the update stage of the algorithm, $G$. This is because every consecutive predictor is received by taking a gradient step from the previous predictor, in the dual space, with a step size $\eta_f$, and projecting back to the primal space by use of the bregman divergence with the specific mirror map in use. We denote the bound on this difference by $\Psi_{\Par{\eta_f, G}}$, i.e., $\forall j, j+1: \| w^f_j - w^f_{j+1} \| \leq \Psi_{\Par{\eta_f, G}}$.
Continuing our derivation, we have:
\begin{align*}
& \leq \frac{1}{\eta_f} \cdot B^2 + \frac{T}{M} \cdot \tau \cdot \frac{ \eta_f \cdot G^2 }{2} + \sum_{j=\tau + 1}^{ \left(\frac{T}{M} + 1 \right) \cdot \tau} \sum_{i=1}^\tau \Psi_{\Par{\eta_f, G}} \cdot G \\
& \leq \frac{1}{\eta_f} \cdot B^2 + \frac{T}{M} \cdot \tau \cdot \frac{ \eta_f \cdot G^2 }{2} + \frac{T}{M} \cdot \tau^2 \cdot \Psi_{\Par{\eta_f, G}} \cdot G &
\end{align*}
Since this upper bound does not depend on the permutation,and holds for every sequence, it holds also in expectation, i.e. 
\begin{align*}
\mathbb{E} \left[ \sum_{i=0}^{\frac{T}{M}-1} \sum_{t=M \cdot i +1}^{M \cdot i +\tau} f_t \left( w_t \right) -  f_t \left( w_f^{\ast} \right) \right] \leq \frac{1}{\eta_f} \cdot B^2 + \frac{T}{M} \cdot \tau \cdot \frac{ \eta_f \cdot G^2 }{2} + \frac{T}{M} \cdot \tau^2 \cdot \Psi_{\Par{\eta_f, G}} \cdot G 
\end{align*}

We now turn to $w^s$
\begingroup
\allowdisplaybreaks
\begin{align*}
& \sum_{i=0}^{\frac{T}{M}-1} \sum_{t=M \cdot i + \tau +1}^{M \cdot \left(i+1 \right)} f_t \left( w_t \right) -  f_t \left( w_s^{\ast} \right) \\
& \leq \sum_{i=0}^{\frac{T}{M}-1} \sum_{t=M \cdot i + \tau +1}^{M \cdot \left(i+1 \right)} \left\langle  w_t - w_s^\ast, \nabla f_t \left( w_t \right) \right\rangle \\
& = \sum_{j=1}^{\frac{T}{M} \cdot \left( M - \tau \right) } \left\langle  w_j^s - w_s^\ast, \nabla f_{T_2 \left( j \right) + \tau} \left( w_j^s \right) \right\rangle \\
& = \sum_{j=1}^{\frac{T}{M} \cdot \left( M - \tau \right)} \left\langle w^s_j - w_s^{\ast}, \nabla f_{T_2 \left( j \right)} \left( w_j^s \right) \right\rangle + \sum_{j=1}^{\frac{T}{M} \cdot \left( M - \tau \right)} \left\langle w^s_j - w_s^{\ast}, \nabla f_{T_2 \left( j \right) + \tau} \left( w_j^s \right) - \nabla f_{T_2 \left( j \right)} \left( w_j^s \right) \right\rangle \\
& \leq \frac{1}{\eta_s} \cdot B^2 + \frac{T}{M} \cdot \left( M - \tau \right) \cdot \frac{ \eta_s \cdot G^2 }{2} + \sum_{j=1}^{\frac{T}{M} \cdot \left( M - \tau \right)} \left\langle w^s_j - w_s^{\ast}, \nabla f_{T_2 \left( j \right) + \tau} \left( w_j^s \right) - \nabla f_{T_2 \left( j \right)} \left( w_j^s \right) \right\rangle &
\end{align*}%
\endgroup

We now look at the expression $\left\langle w^s_j - w_s^{\ast}, \nabla f_{T_2 \left( j \right) + \tau} \left( w_j^s \right) - \nabla f_{T_2 \left( j \right)} \left( w_j^s \right) \right\rangle$ for any $j$. 

We first notice that for any j, $w_j^s$ only depends on gradients of time points: $T_2\left( 1 \right),T_2\left( 2 \right),...,T_2\left( j-1 \right)$. \\
We also notice that given the functions received at these time points, i.e, given $f_{T_2\left( 1 \right)},f_{T_2\left( 2 \right)},...,f_{T_2\left( j-1 \right)}$, $w_j^s$ is no longer a random variable. \\ 
We have that for all $j$, $T_2\left( j \right)$ and $T_2\left( j \right) +\tau$ are both time points that are part of the same $M$-sized block.
Suppose we have observed $n$ functions of the block to which $T_2\left(j\right)$ and $T_2\left(j\right)+\tau$ belong. All of these $n$ functions are further in the past than both $T_2\left(j\right)$ and $T_2\left(j\right)+\tau$, because of the delay of size $\tau$. We have $M-n$ functions in the block that have not been observed yet, and since we performed a random permutation within each block, all remaining functions in the block have the same expected value. Formally, given $w_j^s$, the expected value of the current and delayed gradient are the same, since we have:
$\mathbb{E}[\nabla f_{T_2\left(j\right)+\tau}\left( w_j^s \right)|w_j^s] = \frac{1}{M-n} \cdot \sum_{i = 1}^{M-n} \nabla f_{T_2\left(j\right)+i} \left( w_j^s \right) = \mathbb{E}[\nabla f_{T_2\left(j\right)}\left( w_j^s \right)|w_j^s]$. As mentioned above, this stems from the random permutation we performed within the block - all $M-n$ remaining functions (that were not observed yet in this block) have an equal (uniform) probability of being in each location, and thus the expected value of the gradients is equal.
From the law of total expectation we have that 
\begin{align*}
\mathbb{E}[\nabla f_{T_2\left(j\right)+\tau}\left( w_j^s \right)]= \mathbb{E}[ \mathbb{E}[ \nabla f_{T_2\left(j\right)+\tau}\left( w_j^s \right)|w_j^s ]] = \mathbb{E}[ \mathbb{E}[ \nabla f_{T_2\left(j\right)}\left( w_j^s \right)|w_j^s ]] = \mathbb{E}[\nabla f_{T_2\left(j\right)}\left( w_j^s \right)]
\end{align*}
ans thus $\mathbb{E}[ \nabla f_{T_2 \left( j \right) + \tau} \left( w_j^s \right) - \nabla f_{T_2 \left( j \right)} \left( w_j^s \right) ] = 0 $. 

We get that $\mathbb{E}[ \left\langle w^s_j - w_s^{\ast}, \nabla f_{T_2 \left( j \right) + \tau} \left( w_j^s \right) - \nabla f_{T_2 \left( j \right)} \left( w_j^s \right) \right\rangle] = 0$

So we have that the upper bound on the expected regret of the time point in which we predict with $w^s$ is:
\begin{align*}
\mathbb{E} \left[ \sum_{i=0}^{\frac{T}{M}-1} \sum_{t=M \cdot i + \tau +1}^{M \cdot \left(i+1 \right)} f_t \left( w_t \right) -  f_t \left( w_s^{\ast} \right) \right] \leq \frac{1}{\eta_s} \cdot B^2 + \frac{T}{M} \cdot \left( M - \tau \right) \cdot \frac{ \eta_s \cdot G^2 }{2}
\end{align*}

Summing up the regret of the two sub-algorithms, we get:
\begin{align*}
\mathbb{E} \left[ \sum_{t=1}^{T} f_t \left( w_t \right) -  f_t \left( w^{\ast} \right) \right] 
& \leq \mathbb{E} \left[ \sum_{i=0}^{\frac{T}{M}-1} \sum_{t=M \cdot i +1}^{M \cdot i +\tau} f_t \left( w_t \right) -  f_t \left( w_f^{\ast} \right) +\sum_{i=0}^{\frac{T}{M}-1} \sum_{t=M \cdot i + \tau +1}^{M \cdot \left(i+1 \right)} f_t \left( w_t \right) -  f_t \left( w_s^{\ast} \right) \right] \\ 
& \leq \frac{B^2}{\eta_f} + \eta_f \cdot \frac{T \tau}{M} \cdot \frac{G^2}{2} + \frac{T \tau^2}{M} \cdot G \cdot \Psi_{\Par{\eta_f,G}} + \frac{B^2}{\eta_s} + \eta_s \cdot \frac{T \cdot \Par{M- \tau}}{M} \cdot \frac{G^2}{2} &
\end{align*}
which gives us the bound. \\
For $\Psi_{\Par{\eta_f,G}} \leq c \cdot \eta_f \cdot G$ where $c$ is some constant, choosing the step sizes, $\eta_f,\eta_s$ optimally:
\begin{align*}
\eta_f = \frac{B \cdot \sqrt{M}}{G \cdot \sqrt{ T \cdot \tau \cdot \left( \frac{1}{2} + c \cdot \tau \right)}}, \eta_s = \frac{B \cdot \sqrt{2 M}}{G \cdot \sqrt{T \cdot \left( M - \tau \right) }}
\end{align*}

we get the bound:
\begin{align*}
& \mathbb{E} \left[  \sum_{t=1}^{T} f_t \left( w_t \right) -  f_t \left( w^{\ast} \right) \right] \\ 
& = \sqrt{\frac{T \cdot \tau}{M}} \cdot B \cdot G \cdot \sqrt{\frac{1}{2} + c \cdot \tau} + \sqrt{\frac{T \cdot \tau}{M}} \cdot B \cdot G \cdot \frac{1}{\sqrt{ \frac{1}{2} + c \tau }} + \sqrt{\frac{T \cdot \tau}{M}} \cdot B \cdot G \cdot \frac{c \tau}{\sqrt{ \frac{1}{2} + c \tau }} \\
& + \sqrt{\frac{2 \cdot T \cdot \left( M -\tau \right)}{M}} \cdot B \cdot G  \\
& \leq c \cdot \sqrt{\frac{T \cdot \tau}{M}} \cdot B \cdot G \cdot \sqrt{\frac{1}{2} + c \cdot \tau} + \sqrt{\frac{2 \cdot T \cdot \left( M - \tau \right)}{M}} \cdot B \cdot G \\
& = \mathcal{O}\left( \sqrt{\frac{T \cdot \tau^2}{ M}} + \sqrt{\frac{T \cdot \left( M - \tau \right)}{ M}} \right) = \mathcal{O} \Par{\sqrt{T} \cdot \Par{\sqrt{\frac{\tau^2}{M}} + 1}} &
\end{align*}

\subsection{Lower Bound For Algorithms With No Permutation Power}
\label{app:lowerBoundAdvers}

\begin{theorem}
For every (possible randomized) algorithm $A$, there exists a choice of linear, $1$-Lipschitz functions over $[-1,1] \subset \mathbb{R}$, with $\tau$ a fixed size delay of feedback, such that the expected regret of $A$ after $T$ rounds (with respect to the algorithm's randomness), is
\begin{align*}
\mathbb{E} \left[ R_A \Par{T} \right] = \mathbb{E} \left[ \sum_{t=1}^{T} f_t \Par{w_t} - \sum_{t=1}^{T} f_t \Par{w^{\ast}} \right] = \Omega \Par{\sqrt{\tau T}} \text{, where } w^{\ast} = \underset{w \in \mathcal{W}}{\text{argmin }} \sum_{t=1}^T f_t \Par{ w} 
\end{align*}
\end{theorem}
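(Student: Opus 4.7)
The plan is to construct a random distribution over loss sequences for which any algorithm incurs $\Omega(\sqrt{\tau T})$ expected regret, and then derandomize via Yao's principle. The construction partitions $\{1,\ldots,T\}$ into $T/\tau$ consecutive blocks of size $\tau$, draws independent Rademacher signs $\alpha_b \in \{-1,+1\}$ for each block $b$, and sets $f_t(w) = \alpha_{b(t)} \cdot w$ for every $t$, where $b(t) = \lceil t/\tau \rceil$. These functions are linear and $1$-Lipschitz on $[-1,1]$, matching the hypotheses of the theorem.

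The first key step is to show that $\mathbb{E}\bigl[\sum_t f_t(w_t)\bigr] = 0$. At iteration $t$, the learner has observed feedback only from iterations $1,\ldots,t-\tau$, and since $t-\tau$ lies in block $b(t)-1$, the prediction $w_t$ is measurable with respect to $\alpha_1,\ldots,\alpha_{b(t)-1}$ together with the algorithm's internal randomness. In particular, $w_t$ is independent of $\alpha_{b(t)}$, so $\mathbb{E}[f_t(w_t)] = \mathbb{E}[\alpha_{b(t)}] \cdot \mathbb{E}[w_t] = 0$ for every $t$, and summing over $t$ gives the claim.

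The second key step lower-bounds the comparator. Since $\sum_{t=1}^T f_t(w) = \tau w \sum_{b=1}^{T/\tau} \alpha_b$ is linear in $w$, the minimizer over $[-1,1]$ achieves $\sum_t f_t(w^\ast) = -\tau \bigl|\sum_{b=1}^{T/\tau} \alpha_b\bigr|$. Applying Khintchine's inequality (or a direct second-moment computation) to the Rademacher sum gives $\mathbb{E}\bigl[\bigl|\sum_b \alpha_b\bigr|\bigr] \geq c\sqrt{T/\tau}$ for an absolute constant $c > 0$, hence $\mathbb{E}\bigl[\sum_t f_t(w^\ast)\bigr] \leq -c\sqrt{\tau T}$. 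Combined with the previous step, the expected regret (over both the algorithm's randomness and the random $\alpha$) is $\Omega(\sqrt{\tau T})$, and an averaging argument produces a fixed realization of $(\alpha_b)_b$ that forces this bound even against a randomized algorithm. The construction still satisfies the oblivious adversary assumption since the loss sequence is fixed before play begins.

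The main obstacle I expect is Step 1, namely a careful formalization of the independence of $w_t$ from $\alpha_{b(t)}$. This requires setting up the natural filtration generated by the delayed feedback and the algorithm's internal coins, and verifying that the alignment of block boundaries with the delay window $\tau$ ensures the most recent observable loss $f_{t-\tau}$ lies strictly in an earlier block. Once this measurability argument is pinned down, the remainder is essentially a direct invocation of Khintchine's inequality together with Yao's principle.
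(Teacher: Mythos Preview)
Your proposal is correct and follows essentially the same approach as the paper: the same block construction of size $\tau$ with per-block Rademacher signs, the same two-step argument (zero expected loss for the algorithm via independence of $w_t$ from the current block's sign, and a $-\Omega(\sqrt{\tau T})$ bound for the comparator via Khintchine), and the same Yao-style derandomization. The paper's write-up spells out the filtration argument in more detail, but your outline captures exactly the same proof.
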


\begin{proof}
First, we note that in order to show that for every algorithm, there exists a choice of loss functions by an oblivious adversary, such that the expected regret of the algorithm is bounded from below, it is enough to show that there exists a distribution over loss function sequences such that for any algorithm, the expected regret is bounded from below, where now expectation is taken over both the randomness of the algorithm and the randomness of the adversary. This is because if there exists such a distribution over loss function sequences, then for any algorithm, there exists some sequence of loss functions that can lead to a regret at least as high. To put it formally, if we mark $\underset{alg}{\mathbb{E}}$ the expectation over the randomness of the algorithm, and $\underset{f_1,...,f_T}{\mathbb{E}}$ the expectation over the randomness of the adversary, then:
\begin{gather*}
\exists \mbox{ a (randomized) adversary s.t. } \forall \mbox{ algorithm A, } \underset{f_1,...,f_T}{\mathbb{E}} \mbox{    } \underset{alg}{\mathbb{E}} \left[ R_A \Par{T} \right] > \Omega \Par{\sqrt{\tau T}} \rightarrow \\
\forall \mbox{ algorithm A, } \exists f_1,...,f_T \mbox{ s.t. } \underset{alg}{\mathbb{E}} \left[ R_A \Par{T} \right] > \Omega \Par{\sqrt{\tau T}}
\end{gather*}
Thus, we prove the first statement above, that immediately gives us the second statement which gives the lower bound.

We consider the setting where $\mathcal{W} = \left[ -1, 1 \right]$, and $\forall t\in [1,T]: f_t\left(w_t \right) = \alpha_t \cdot w_t$ where $\alpha_t \in \lbrace 1, -1 \rbrace$. We divide the $T$ rounds to blocks of size $\tau$. $\alpha_t$ is chosen in the following way: if $\alpha_t$ is the first $\alpha$ in the block, it is randomly picked, i.e, $\Pr \left( \alpha = \pm 1 \right) = \frac{1}{2}$. Following this random selection, the next $\tau - 1$ $\alpha$'s of the block will be identical to the first $\alpha$ in it, so that we now have a block of $\tau $ consecutive functions in which $\alpha$ is identical. 
We wish to lower bound the expected regret of any algorithm in this setting.

Consider a sequence of predictions by the algorithm $w_1,w_2,...,w_T$. Denote by $\alpha_{i,j}$ the j'th $\alpha$ in the i'th block, and similarly for $w_{i,j}, f_{i,j}$. We denote the entire sequence of $\alpha$'s by $\bar{\alpha}_{\left(1 \rightarrow T\right)}$, and the sequence of $\alpha$'s until time point $j$ in block $i$ by $\bar{\alpha}_{\left(1 \rightarrow i,j\right)}$.
Notice that $w_{i,j}$ is a function of the $\alpha$'s that arrive up until time point $i \cdot \tau  + j - \tau -1$. We denote these $\alpha$'s as $\bar{\alpha}_{\left(1 \rightarrow i,j - \tau -1 \right)}$. 

Then the expected sum of losses is:
\begingroup
\allowdisplaybreaks
\begin{align*}
\mathbb{E}\left[ \sum_{t=1}^T f_t \left( w_t \right) \right] & = \mathbb{E}\left[ \sum_{i=1}^{\frac{T}{\tau}} \sum_{j=1}^{\tau} f_{i,j} \left( w_{i,j} \right) \right] \\
& = \sum_{i=1}^{\frac{T}{\tau}} \sum_{j=1}^{\tau} \mathbb{E}\left[ f_{i,j} \left( w_{i,j} \right) \right] \\
& = \sum_{i=1}^{\frac{T}{\tau}} \sum_{j=1}^{\tau} \mathbb{E}_{\bar{\alpha}_{\left(1 \rightarrow T\right)}}\left[ \alpha_{i,j} \cdot w_{i,j} \right] \\
& = \sum_{i=1}^{\frac{T}{\tau}} \sum_{j=1}^{\tau} \mathbb{E}_{\bar{\alpha}_{\left(1 \rightarrow i,j- \tau -1 \right)}} \left[ \mathbb{E}_{\bar{\alpha}_{\left(i,j- \tau \rightarrow T \right)}} \left[ \alpha_{i,j} \cdot w_{i,j} | \bar{\alpha}_{\left(1 \rightarrow i,j - \tau -1 \right)} \right] \right] \\
& = \sum_{i=1}^{\frac{T}{\tau}} \sum_{j=1}^{\tau} \mathbb{E}_{\bar{\alpha}_{\left(1 \rightarrow i,j- \tau -1 \right)}} \left[ w_{i,j} \cdot \mathbb{E}_{\bar{\alpha}_{\left(i,j- \tau \rightarrow T \right)}} \left[ \alpha_{i,j} | \bar{\alpha}_{\left(1 \rightarrow i,j - \tau -1 \right)} \right] \right] \\
& = \sum_{i=1}^{\frac{T}{\tau}} \sum_{j=1}^{\tau} \mathbb{E}_{\bar{\alpha}_{\left(1 \rightarrow i,j- \tau -1 \right)}} \left[ w_{i,j} \cdot \mathbb{E}_{ \bar{\alpha}_{\left(i,1 \rightarrow i,j \right)}} \left[ \alpha_{i,j} | \bar{\alpha}_{\left(1 \rightarrow i,j - \tau -1 \right)} \right] \right] \\
& = \sum_{i=1}^{\frac{T}{\tau}} \sum_{j=1}^{\tau} \mathbb{E}_{\bar{\alpha}_{\left(1 \rightarrow i,j- \tau -1 \right)}} \left[ w_{i,j} \cdot \mathbb{E}_{ \alpha_{i,1}} \left[ \alpha_{i,1} \right] \right] \\
& = \sum_{i=1}^{\frac{T}{\tau}} \sum_{j=1}^{\tau} \mathbb{E}_{\bar{\alpha}_{\left(1 \rightarrow i,j- \tau -1 \right)}} \left[ w_{i,j} \cdot \left( \frac{1}{2} \cdot 1 + \frac{1}{2} \cdot \left(-1 \right) \right) \right] = 0 &
\end{align*}%
\endgroup

The last equality is true because every first $\alpha$ in any block has probability $\frac{1}{2}$ to be either $+1$ or $-1$.

We now continue to the expected sum of losses for the optimal choice of $w^{\ast} = argmin_{w \in \mathcal{W}} \left( \sum_{t=1}^T f_t \left( w \right) \right)$. Note that in this setting, $w^\ast \in \lbrace +1,-1 \rbrace$ and is with opposite sign to the majority of $\alpha$'s in the sequence.

\begin{align*}
\mathbb{E}\left[ \sum_{t=1}^T f_t \left( w^{\ast} \right) \right] & = \mathbb{E}\left[ \sum_{i=1}^{\frac{T}{\tau}} \sum_{j=1}^{\tau} f_{i,j} \left( w^{\ast} \right) \right] = \mathbb{E}\left[ \sum_{i=1}^{\frac{T}{\tau}} \sum_{j=1}^{\tau} \alpha_{i,j} \cdot  w^{\ast} \right] \\
& = \mathbb{E}\left[ \sum_{i=1}^{\frac{T}{\tau}} \tau \cdot \alpha_{i,1} \cdot  w^{\ast} \right] = \tau \cdot \mathbb{E}\left[ \sum_{i=1}^{\frac{T}{\tau}} \alpha_{i,1} \cdot  w^{\ast} \right] \\
& = - \tau \cdot \mathbb{E}\left[ |  \sum_{i=1}^{\frac{T}{\tau}} \alpha_{i,1} | \right]
\end{align*}

Using Khintchine inequality we have that:
\begin{align*}
-\tau \cdot \mathbb{E}\left[ |  \sum_{i=1}^{\frac{T}{\tau}} \alpha_{i,1} \cdot 1 | \right] \leq -  \tau \cdot C \cdot \sqrt{\left( \sum_{i=1}^{\frac{T}{\tau}} 1^2 \right) } = - \tau  \cdot C \cdot \sqrt{\frac{T}{\tau}} =  - \Omega \left( \sqrt{ \tau  \cdot T} \right)
\end{align*}

where $C$ is some constant.

Thus we get that for a sequence of length $T$ the expected regret is:
\begin{align*}
\mathbb{E}\left[ \sum_{t=1}^T f_t \left( w_t \right) \right] - \mathbb{E}\left[ \sum_{t=1}^T f_t \left( w^{\ast} \right) \right] = \Omega \left( \sqrt{\tau \cdot T} \right)
\end{align*}
\end{proof}
\subsection{Proof of Theorem~\ref{lemma:lowerBound}}
\label{app:lowerBoundProof}
\begin{proof}
First, we note that to show that for every algorithm, there exists a choice of loss functions by an oblivious adversary, such that the expected regret of the algorithm is bounded from below, it is enough to show that there exists a distribution over loss function sequences such that for any algorithm, the expected regret is bounded from below, where now expectation is taken over both the randomness of the algorithm and the randomness of the adversary. This is because if there exists such a distribution over loss function sequences, then for any algorithm, there exists some sequence of loss functions that can lead to a regret at least as high. To put it formally, if we mark $\underset{alg}{\mathbb{E}}$ the expectation over the randomness of the algorithm, and $\underset{f_1,...,f_T}{\mathbb{E}}$ the expectation over the randomness of the adversary, then:
\begin{gather*}
\exists \mbox{ a (randomized) adversary s.t. } \forall \mbox{ algorithm A, } \underset{f_1,...,f_T}{\mathbb{E}} \mbox{    } \underset{alg}{\mathbb{E}} \left[ R_A \Par{T} \right] > \Omega \Par{\sqrt{\tau T}} \rightarrow \\
\forall \mbox{ algorithm A, } \exists f_1,...,f_T \mbox{ s.t. } \underset{alg}{\mathbb{E}} \left[ R_A \Par{T} \right] > \Omega \Par{\sqrt{\tau T}}
\end{gather*}
Thus, we prove the first statement above, that immediately gives us the second statement which is indeed our lower bound.

We consider the setting where $\mathcal{W} = \left[ -1, 1 \right]$, and $\forall t\in [1,T]: f_t\left(w_t \right) = \alpha_t \cdot w_t$ where $\alpha_t \in \lbrace 1, -1 \rbrace$. We start by constructing our sequence of $\alpha$'s. We divide the $T$ iterations to blocks of size $\frac{\tau}{3}$. In each block, all $\alpha$'s are identical, and are chosen to be $+1$ or $-1$ w.p. $\frac{1}{2}$. This choice gives us blocks of $\frac{\tau}{3}$ consecutive functions in which $\alpha$ is identical within each block. 
Let $M$ be a permutation window of size smaller than $\frac{\tau}{3}$.
We notice first that since $M < \frac{\tau}{3}$ and the sequence of $\alpha$'s is organized in blocks of size $\frac{\tau}{3}$, then even after permutation, the time difference between the first and last time we encounter an $\alpha$ is $\leq \tau$, which means we will not get the feedback from the first time we encountered this $\alpha$ before encountering the next one, and we will not be able to use it for correctly predicting $\alpha$'s of this (original) block that arrive later. This is the main idea that stands in the basis of this lower bound. 

Formally, consider a sequence of $w_1,w_2,...,w_T$ chosen by the algorithm. Denote by $\alpha_{i,j}$ the j'th $\alpha$ in the i'th block, and similarly for $w_{i,j}, f_{i,j}$. We denote the entire sequence of $\alpha$'s by $\bar{\alpha}_{\left(1 \rightarrow T\right)}$, and the sequence of $\alpha$'s until time point $j$ in block $i$ by $\bar{\alpha}_{\left(1 \rightarrow i,j\right)}$.
For simplicity we will denote $\beta_t$ as the $\alpha$ that was presented at time $t$, after permutation, i.e. $\beta_t := \alpha_{\sigma^-1 \Par(t)}$.
Notice that $w_{i,j}$ is a function of the $\beta$'s that arrive up until time point $i \cdot \left( \frac{\tau}{3} \right) + j - \tau -1$. We denote these $\beta$'s as $\bar{\beta}_{\left(1 \rightarrow i,j - \tau -1 \right)}$. I.e $w_{i,j} = g \left( \bar{\beta}_{\left(1 \rightarrow i,j - \tau -1 \right)} \right)$ where $g$ is some function.

Going back to our main idea of the construction, we can put it in this new terminology- since the delay is $\tau$ and the permutation window is $M < \frac{\tau}{3}$, for any $i,j$, the first time we encountered $\alpha_{\sigma^{-1}\left( i,j \right)}$ is less than $\tau$ iterations ago, and thus, $\beta_{i,j}$ is independent of $\bar{\beta}_{\left( 1 \rightarrow i,j - \tau -1 \right)}$, while $w_{i,j}$ is a function of it: $w_{i,j} = g \left( \bar{\beta}_{\left( 1 \rightarrow i,j - \tau -1 \right)} \right)$. 

With this in hand, we look at the sum of losses of the predictions of the algorithm, $w_1,w_2,...,w_T$:
\begingroup
\allowdisplaybreaks
\begin{align*}
\mathbb{E}\left[ \sum_{t=1}^T f_t \left( w_t \right) \right] & = \mathbb{E}\left[ \sum_{i=1}^{\nicefrac{T}{\frac{\tau}{3}}} \sum_{j=1}^{\frac{\tau}{3}} f_{i,j} \left( w_{i,j} \right) \right] \\
& = \sum_{i=1}^{\nicefrac{T}{\frac{\tau}{3}}} \sum_{j=1}^{\frac{\tau}{3}} \mathbb{E}\left[ f_{i,j} \left( w_{i,j} \right) \right] \\ 
& = \sum_{i=1}^{\nicefrac{T}{\frac{\tau}{3}}} \sum_{j=1}^{\frac{\tau}{3}} \mathbb{E}_{\bar{\beta}_{\left(1 \rightarrow T\right)}}\left[ \beta_{i,j} \cdot w_{i,j} \right] \\
& = \sum_{i=1}^{\nicefrac{T}{\frac{\tau}{3}}} \sum_{j=1}^{\frac{\tau}{3}} \mathbb{E}_{\bar{\beta}_{\left(1 \rightarrow i,j- \tau -1 \right)}} \left[ \mathbb{E}_{\bar{\beta}_{\left(i,j- \tau \rightarrow T \right)}} \left[ \beta_{i,j} \cdot w_{i,j} | \bar{\beta}_{\left(1 \rightarrow i,j - \tau -1 \right)} \right] \right] \\
& = \sum_{i=1}^{\nicefrac{T}{\frac{\tau}{3}}} \sum_{j=1}^{\frac{\tau}{3}} \mathbb{E}_{\bar{\beta}_{\left(1 \rightarrow i,j- \tau -1 \right)}} \left[ w_{i,j} \cdot \mathbb{E}_{\bar{\beta}_{\left(i,j- \tau \rightarrow T \right)}} \left[ \beta_{i,j} | \bar{\beta}_{\left(1 \rightarrow i,j - \tau -1 \right)} \right] \right] \\
& = \sum_{i=1}^{\nicefrac{T}{\frac{\tau}{3}}} \sum_{j=1}^{\frac{\tau}{3}} \mathbb{E}_{\bar{\beta}_{\left(1 \rightarrow i,j- \tau -1\right)}} \left[ w_{i,j} \cdot \mathbb{E}_{ \bar{\beta}_{\left(i,j- \tau \rightarrow T \right)}} \left[ \alpha_{\sigma^{-1}\left( i,j \right)} \right] \right] \\
& = \sum_{i=1}^{\nicefrac{T}{\frac{\tau}{3}}} \sum_{j=1}^{\frac{\tau}{3}} \mathbb{E}_{\bar{\beta}_{\left(1 \rightarrow i,j- \tau -1\right)}} \left[ w_{i,j} \cdot \left( \frac{1}{2} \cdot 1 + \frac{1}{2} \cdot \left(-1 \right) \right) \right] = 0 &
\end{align*}%
\endgroup

where the last equality stems from the fact that $\beta_{i,j} = \alpha_{\sigma^{-1}\left(i,j\right)}$ is equal to the expected value of the first time we encountered the $\alpha$ that corresponds to $\alpha_{\sigma^{-1}\left(i,j\right)}$, i.e, the first $\alpha$ that came from the same block of $\alpha_{\sigma^{-1}\left(i,j\right)}$. This expectation is 0 since we choose $\alpha=1$ or $\alpha=-1$ with probability $\frac{1}{2}$ for each block.

We now continue to the expected sum of losses for the optimal choice of $w^{\ast} = argmin_{w \in \mathcal{W}} \left( \sum_{t=1}^T f_t \left( w \right) \right)$. Note that after permutation, the expected sum of losses of the optimal $w$ remains the same since it is best predictor over the entire sequence, and so for simplicity we look at the sequence of $\alpha$'s as it is chosen initially. Also, in this setting, $w^\ast \in \lbrace +1,-1 \rbrace$ and is with opposite sign to the majority of $\alpha$'s in the sequence.

\begin{align*}
\mathbb{E}\left[ \sum_{t=1}^T f_t \left( w^{\ast} \right) \right] & = \mathbb{E}\left[ \sum_{i=1}^{\nicefrac{T}{\frac{\tau}{3}}} \sum_{j=1}^{\frac{\tau}{3}} f_{i,j} \left( w^{\ast} \right) \right] = \mathbb{E}\left[ \sum_{i=1}^{\nicefrac{T}{\frac{\tau}{3}}} \sum_{j=1}^{\frac{\tau}{3}} \alpha_{i,j} \cdot  w^{\ast} \right] \\
& = \mathbb{E}\left[ \sum_{i=1}^{\nicefrac{T}{\frac{\tau}{3}}} \frac{\tau}{3} \cdot \alpha_{i,1} \cdot  w^{\ast} \right] = 
\frac{\tau}{3} \cdot \mathbb{E}\left[ \sum_{i=1}^{\nicefrac{T}{\frac{\tau}{3}}} \alpha_{i,1} \cdot  w^{\ast} \right] \\
& = -  \frac{\tau}{3} \cdot \mathbb{E}\left[ |  \sum_{i=1}^{\nicefrac{T}{\frac{\tau}{3}}} \alpha_{i,1} | \right]
\end{align*}

Using Khintchine inequality we have that:
\begin{align*}
-\frac{\tau}{3} \cdot \mathbb{E}\left[ |  \sum_{i=1}^{\nicefrac{T}{\frac{\tau}{3}}} \alpha_{i,1} \cdot 1 | \right] & \leq - \frac{\tau}{3} \cdot C \cdot \sqrt{\left( \sum_{i=1}^{\nicefrac{T}{\frac{\tau}{3}}} 1^2 \right) } = - \frac{\tau}{3} \cdot C \cdot \sqrt{\frac{T}{\frac{\tau}{3}}} \\ 
& = - \Omega \left( \sqrt{ \frac{\tau}{3} \cdot T} \right) = - \Omega \left( \sqrt{\tau \cdot T} \right)
\end{align*}

where $C$ is some constant.

Thus we get that overall expected regret for any algorithm with permutation power $M < \frac{\tau}{3}$ is:
\begin{align*}
\mathbb{E}\left[ \sum_{t=1}^T f_t \left( w_t \right) \right] - \mathbb{E}\left[ \sum_{t=1}^T f_t \left( w^{\ast} \right) \right] = \Omega \left( \sqrt{\tau \cdot T} \right)
\end{align*}

as in the adversarial case.

\end{proof}

\end{document}